\newtheorem{theorem}{Theorem}[section]
\newtheorem{lemma}[theorem]{Lemma}
\newtheorem{assumption}{Assumption}[section]
\newtheorem{remark}{Remark}[section]
\newcommand{\NN}{{\mathbb N}}
\newcommand{\RR}{{\mathbb R}}
\newcommand{\cA}{{\mathcal A}}
\title{\bf Provably Robust Learning-Based Approach for High-Accuracy Tracking Control of Lagrangian Systems
\thanks{The authors are with the Dynamic Systems Lab (www.dynsyslab.org), Institute
for Aerospace Studies, University of Toronto, Canada. E-mail:
mohamed.helwa@robotics.utias.utoronto.ca, adam.heins@robotics.utias.utoronto.ca, schoellig@utias.utoronto.ca.
~~~~This research was supported by NSERC grant
RGPIN-2014-04634 and OCE/SOSCIP TalentEdge Project \#27901.}}%NSERC grant RGPIN-2014-04634 and the Connaught New Researcher Award.}} 
\author{Mohamed~K.~Helwa, Adam Heins, and Angela P. Schoellig}% <-this % stops a space
\begin{document}
\maketitle

\begin{abstract}
Lagrangian systems represent a wide range of robotic systems, including manipulators, wheeled and legged robots, and quadrotors. Inverse dynamics control and feedforward linearization techniques are typically used to convert the complex nonlinear dynamics of Lagrangian systems to a set of decoupled double integrators, and then a standard, outer-loop controller can be used to calculate the commanded acceleration for the linearized system. However, these methods typically depend on having a very accurate system model, which is often not available in practice. While this challenge has been addressed in the literature using different learning approaches, most of these approaches do not provide safety guarantees in terms of stability of the learning-based control system. In this paper, we provide a novel, learning-based control approach based on Gaussian processes (GPs) that ensures both stability of the closed-loop system and high-accuracy tracking. We use GPs to approximate the error between the commanded acceleration and the actual acceleration of the system, and then use the predicted mean and variance of the GP to calculate an upper bound on the uncertainty of the linearized model. This uncertainty bound is then used in a robust, outer-loop controller to ensure stability of the overall system. Moreover, we show that the tracking error converges to a ball with a radius that can be made arbitrarily small. Furthermore, we verify the effectiveness of our approach via simulations on a 2 degree-of-freedom (DOF) planar manipulator and experimentally on a 6~DOF industrial manipulator.

\end{abstract}

\section{Introduction}
\label{sec:introd}
High-accuracy tracking is an essential requirement in advanced manufacturing, self-driving cars, medical robots, and autonomous flying vehicles, among others. To achieve high-accuracy tracking for these complex, typically high-dimensional, nonlinear robotic systems, a standard approach is to use inverse dynamics control \cite{spong} or feedforward linearization techniques \cite{fflin} to convert the complex nonlinear dynamics into a set of decoupled double integrators. Then, a standard, linear, outer-loop controller, e.g., a proportional-derivative (PD) controller, can be used to make the decoupled linear system track the desired trajectory \cite{spong}. However, these linearization techniques depend on having accurate system models, which are difficult to obtain in practice.%from the first principles of physics.

\begin{figure}[t]
\begin{center}
\includegraphics[scale=.28, trim = 10mm 20mm 10mm 10mm]{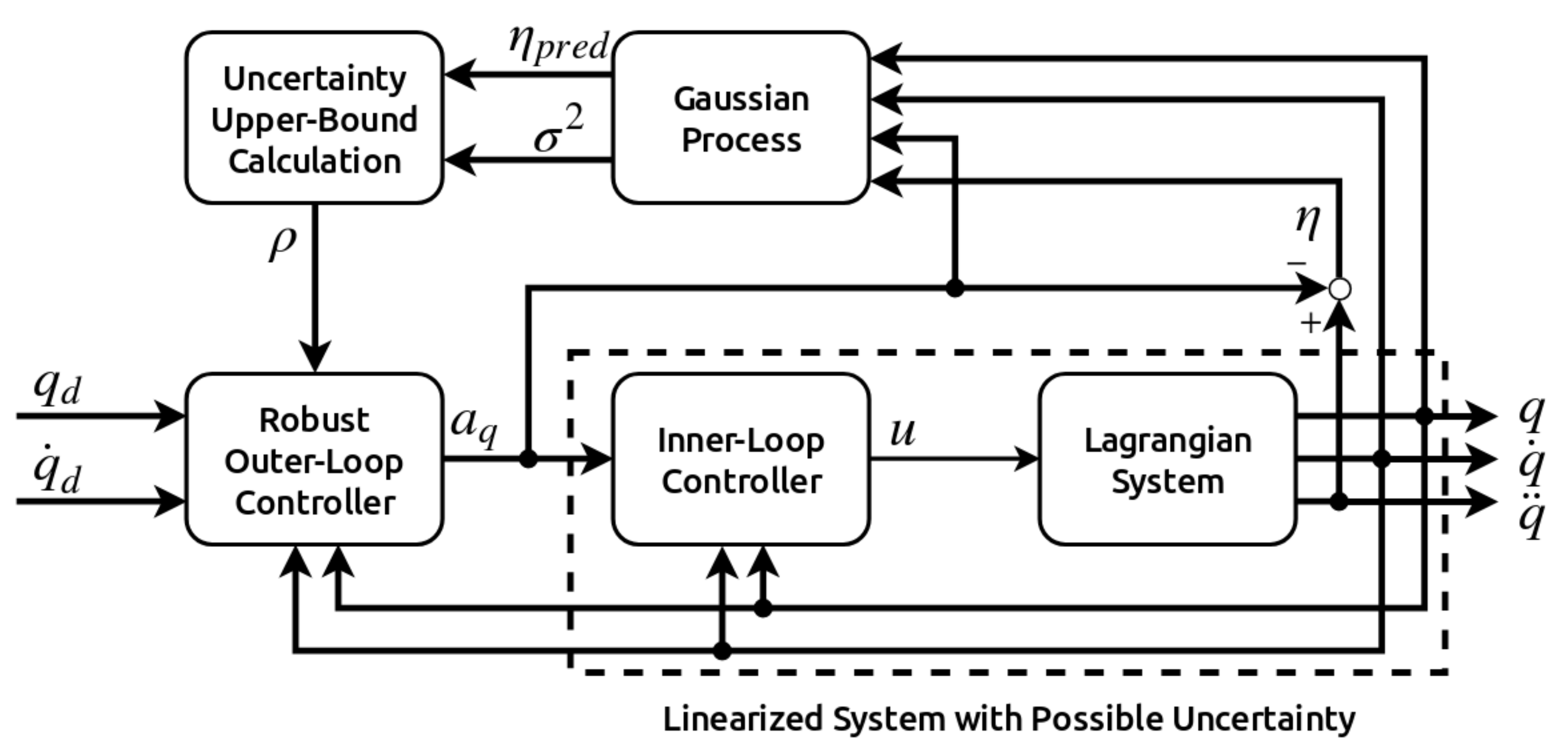} 
\end{center}
\caption{Block diagram of our proposed strategy. GP regression models learn the uncertainty in the linearized model. Using the mean and variance of the GP, one can calculate an upper bound on the uncertainty to be used in a robust, outer-loop controller. The symbol $q$ is the actual position vector, $q_d$ is the desired position vector, $a_q$ is the commanded acceleration vector, $u$ is the force/torque input vector, and $\eta$ is the uncertainty vector.}
\label{fig:structure}
\end{figure}

To address this problem, robust control techniques have been used for many decades to design the outer-loop controllers to account for the uncertainties in the model~\cite{robust_con}. However, the selection of the uncertainty bounds in the robust controller design is challenging. On the one hand, selecting high bounds typically results in a conservative behavior, and hence, a large tracking error. On the other hand, relatively small uncertainty bounds may not represent the true upper bounds of the uncertainties, and consequently, stability of the overall system is not ensured. Alternatively, several approaches have been proposed for learning the inverse system dynamics from collected data where the system models are not available or not sufficiently accurate; see \cite{thesis,peters1,invlearn1,invlearn2}. Combining a-priori model knowledge with learning data has also been studied in \cite{thesis,invlearn_model1}. However, these learning approaches typically neglect the learning regression errors in the analysis, and they do not provide a proof of stability of the overall, learning-based control system, which is crucial for safety-critical applications such as medical robots. The limitations of the robust control and the learning-based techniques show the urgent need for novel, robust, learning-based control approaches that ensure both stability of the overall control system and high-accuracy tracking. This sets the stage for the research carried out in this paper.  

In this paper, we provide a novel, robust, learning-based control technique that achieves both closed-loop stability and high-accuracy tracking. In particular, we use Gaussian processes (GPs) to approximate the error between the commanded acceleration to the linearized system and the actual acceleration of the robotic system, and then use the predicted mean and variance of the GP to calculate an upper bound on the uncertainty of the linearization. This uncertainty bound is then used in a robust, outer-loop controller to ensure stability of the overall system (see Figure~\ref{fig:structure}). Moreover, we show that using our proposed strategy, the tracking error converges to a ball with a radius that can be made arbitrarily small through appropriate control design, and hence, our proposed approach also achieves high-accuracy tracking. Furthermore, we verify the effectiveness of the proposed approach via simulations on a 2 DOF planar manipulator using MATLAB Simulink and experimentally on a UR10 6 DOF industrial manipulator.%, and \emph{(ii)} experimental results on a 6 DOF industrial manipulator. Our approach is different from \cite{Kulic} from several aspects.

This paper is organized as follows. Section~\ref{sec:related} provides a summary of some recent related work.
%, while Section \ref{sec:background} provides preliminary, relevant background. 
Section~\ref{sec:problem} describes the considered problem, and Section~\ref{sec:alg} provides the proposed approach. Section~\ref{sec:theory} derives theoretical guarantees for the proposed approach. Section~\ref{sec:sim} and \ref{sec:exp} include the simulation and experimental results, and Section~\ref{sec:con} concludes the paper.      
%%that are necessary to understand the rest of the paper. 
%In Section \ref{sec:problem}, we define the transfer learning problem studied in this paper. In Section \ref{sec:basic}, we provide theoretical results on transformation maps that achieve perfect transfer learning, and then utilize these results to provide insights into the correct features of optimal transfer maps. In Section \ref{sec:alg}, we present our proposed, practical algorithm. Section \ref{sec:examples} includes two robotic applications, and Section \ref{sec:con} concludes the paper. 

\emph{Notation and Basic Definitions:} For a set $S$, $\bar{S}$ denotes its closure and $S^{\circ}$ its interior. The notation $B_{\delta}(y)$ denotes a ball of radius $\delta$ centered at a point $y$. A matrix $P$ is \emph{positive definite} if it is symmetric and all its eigenvalues are positive. For a vector $x$, $\|x\|$ denotes its Euclidean norm. A function $f(x)$ is \emph{smooth} if its partial derivatives of all orders exist and are continuous. The solutions of $\dot{x}=f(t,x)$ are \emph{uniformly ultimately bounded with ultimate bound $b$} if there exist positive constants $b,~c$, and for every $0<a<c$, there exists $T(a,b)\geq 0$ such that $\|x(t_0)\|\leq a$ implies $\|x(t)\|\leq b$, for all $t\geq (T+t_0)$, where $t_0$ is the initial time instant. A \emph{kernel} is a symmetric function $k:\cA\times \cA\rightarrow \RR$. A \emph{reproducing kernel
Hilbert space (RKHS)} corresponding to a kernel $k(.,.)$ includes functions of the form $f(a)=\sum_{j=1}^{m}\alpha_jk(a,a_j)$ with $m\in \NN$, $\alpha_j\in \RR$ and representing points $a_j\in \cA$. 
%The RKHS norm, denoted $\|f\|_k$, is defined based on the definition of inner product of functions, and its boundedness implies that the coefficients $\alpha_j$ decay sufficiently fast as $j$ increases, and the functions are regular with respect to the kernel.  

%We say that a function is a kernel %We also call $B_b(0)$ the ultimate ball.  
  
\section{Related Work}
\label{sec:related}

The study of safe learning dates back to the beginning of this century \cite{general_safe_learning}. In \cite{LyapRL1} and \cite{LyapRL2}, Lyapunov-based reinforcement learning is used to allow a learning agent to safely switch between pre-computed baseline controllers. Then, in \cite{risk_rl}, risk-sensitive reinforcement learning is proposed, in which the expected return is heuristically weighted with the probability of reaching an error state. In several other papers, including \cite{safe_exp1}, \cite{safe_exp2} and \cite{GP_felix}, safe exploration methods are utilized to allow the learning modules to achieve a desired balance between ensuring safe operation and exploring new states for improved performance. In \cite{general_safe_learning}, a general framework is proposed for ensuring safety of learning-based control strategies for uncertain robotic systems. In this framework, robust reachability guarantees from control theory are combined with Bayesian analysis based on empirical observations. The result is a safety-preserving, supervisory controller of the learning module that allows the system to freely execute its learning policy almost everywhere, but imposes control actions to ensure safety at critical states. Despite its effectiveness for ensuring safety, the supervisory controller in this approach has no role in reducing tracking errors.  %that can be caused by the regression errors of the learning modules.  

%Focusing our attention on safe, learning-based inverse dynamics control, we refer to \cite{Schaal,Kulic}. In these papers, Gaussian processes (GPs) are utilized to learn the errors in the output torques of the inverse dynamics model online. In \cite{Schaal}, the GP learning is combined with a state-of-the-art gradient descent method for learning feedback terms online. The main idea behind this approach is that the offsets learnt online by the gradient descent method would correct for fast perturbations, while the GP is responsible for correcting slow perturbations. This allows for exponential smoothing of the GP hyperparameters, which increases the robustness of the GP at the cost of having slower reactiveness. Nevertheless, \cite{Schaal} does not provide a proof of the robust stability of the closed-loop system. In \cite{Kulic}, the variance of the GP prediction is utilized to adapt the parameters of an outer-loop PD controller online, and the uniform ultimate boundedness of the tracking error is proved under some assumptions on the structure of the PD controller (e.g., the gain matrix was assumed to be diagonal, which imposes a decentralized gain control scheme). The results of \cite{Kulic} are verified via simulations on a 2 DOF manipulator.

Focusing our attention on safe, learning-based inverse dynamics control, we refer to \cite{adaptive1,Schaal,Kulic}. In \cite{adaptive1}, a model reference adaptive control (MRAC) architecture based on Gaussian processes (GPs) is proposed, and stability of the overall control system is proved. While the approach in \cite{adaptive1} is based on adaptive control theory, our approach is based on robust control theory. In particular, in \cite{adaptive1}, the mean of the GP is used to exactly cancel the uncertainty vector, while in our approach, we use both the mean and variance of the GP to learn an upper bound on the uncertainty vector to be used in a robust, outer-loop controller. Hence, unlike \cite{adaptive1}, in our approach, the uncertainty of the learning module is not only incorporated in the stability analysis but also in the outer-loop controller design. Intuitively, the less certain our GPs are, the more robust the outer-loop controller should be for ensuring safety. When more data is collected and the GPs are more certain, the outer-loop controller can be less conservative for improved performance. While the results of \cite{adaptive1} are tested in simulations on a two-dimensional system, we test our results experimentally on a 6 DOF manipulator. 

In \cite{Schaal,Kulic}, GPs are utilized to learn the errors in the output torques of the inverse dynamics model online. In \cite{Schaal}, the GP learning is combined with a state-of-the-art gradient descent method for learning feedback terms online. The main idea behind this approach is that the gradient descent method would correct for fast perturbations, while the GP is responsible for correcting slow perturbations. This allows for exponential smoothing of the GP hyperparameters, which increases the robustness of the GP at the cost of having slower reactiveness. Nevertheless,~\cite{Schaal} does not provide a proof of the robust stability of the closed-loop system. In~\cite{Kulic}, the variance of the GP prediction is utilized to adapt the parameters of an outer-loop PD controller online, and the uniform ultimate boundedness of the tracking error is proved under some assumptions on the structure of the PD controller (e.g., the gain matrix was assumed to be diagonal, which imposes a decentralized gain control scheme). The results of~\cite{Kulic} are verified via simulations on a 2 DOF manipulator.

Our approach differs from \cite{Kulic} in several aspects. First, we do not use an adaptive PD controller in the outer loop, but add a robustness term to the output of the outer-loop controller. Second, while \cite{Kulic} uses the GP to learn the error in the estimated torque from the nominal inverse dynamics, in our approach, we learn the error between the commanded and actual accelerations. This can be beneficial in two aspects: \emph{(i)} This makes our approach applicable to industrial manipulators that have onboard controllers for calculating the torque and allow the user to only send commanded acceleration/velocity; 
%We will show that benefit in detail in our simulation and experimental results in Sections \ref{} and \ref{}; 
\emph{(ii)} this makes our approach also applicable beyond inverse dynamics control of manipulators; indeed, our proposed approach can be applied to any Lagrangian system for which feedforward/feedback linearization can be used to convert the nonlinear dynamics of the system to a set of decoupled double integrators, such as a quadrotor under a feedforward linearization, see Section 5.3 of \cite{HS17}. Third, while \cite{Kulic} shows uniform ultimate boundedness of the tracking error, it does not provide discussions on the size of the ultimate ball. In this work, we show that using our proposed approach, the size of the ball can be made arbitrarily small through the control design. Fourth, in our approach, we do not impose any assumption on the structure of the outer-loop PD controller and decentralized, outer-loop control is not needed for our proof. Finally, we verify our approach experimentally on a 6 DOF manipulator.

\section{Problem Statement}
\label{sec:problem}
In this paper, we consider Lagrangian systems, which represent a wide class of mechanical systems \cite{Murray}. In what follows, we focus our attention on a class of Lagrangian systems represented by:
\begin{equation}
\label{eq:sys_man}
M(q(t))\ddot{q}(t)+C(q(t),\dot{q}(t))\dot{q}(t)+g(q(t))=u(t),
\end{equation}
where $q=(q_1,\cdots,q_N)$ is the vector of generalized coordinates (displacements or angles), $\dot{q}=(\dot{q}_1,\cdots,\dot{q}_N)$ is the vector of generalized velocities, $u=(u_1,\cdots,u_N)$ is the vector of generalized forces (forces or torques), $N$ is the system's degree of freedom, $M$, $C$, and $g$ are matrices of proper dimensions and smooth functions, and $M(q)$ is a positive definite matrix. Fully-actuated robotic manipulators are an example of Lagrangian systems that can be expressed by~\eqref{eq:sys_man}. Despite focusing our discussion on systems represented by~\eqref{eq:sys_man}, we emphasize that our results in this paper can be easily generalized to a wider class of nonlinear Lagrangian systems for which feedback/feedforward linearization can be utilized to convert the dynamics of the system into a set of decoupled double integrators plus an uncertainty vector. 
%as we will discuss in detail in Remark \ref{rem:general}.

For the nonlinear system \eqref{eq:sys_man} with uncertain matrices $M$, $C$, and $g$, we aim to make the system positions and velocities $(q(t),\dot{q}(t))$ track a desired smooth trajectory $(q_d(t),\dot{q}_d(t))$. For simplicity of notation, in our discussion, we drop the dependency on time t from $q,~q_d$, their derivatives, and $u$. Our goal is to design a novel, learning-based control strategy that is easy to interpret and implement, and that satisfies the following desired objectives:
\begin{itemize}
\item [(O1)] \emph{Robustness:} The overall, closed-loop control system satisfies robust stability in the sense that the tracking error has an upper bound under the system uncertainties. 
\item [(O2)] \emph{High-Accuracy Tracking:} For feasible desired trajectories, the tracking error converges to a ball around the origin that can be made arbitrarily small through the control design. 
%for feasible desired trajectories that can be followed with inputs within the actuation limits. 
For the ideal case, where the pre-assumed system parameters are correct, the tracking error should converge exponentially to the origin.
\item [(O3)] \emph{Adaptability:} The proposed strategy should incorporate online learning to continuously adapt to online changes of the system parameters and disturbances.  
\item [(O4)] \emph{Generalizability of the Approach:} The proposed approach should be general enough to be also applicable to industrial robots that have onboard controllers for calculating the forces/torques and allow the user to send only commanded acceleration/velocity. %It is also desirable that the approach is general enough to be implementable on other Lagrangian systems beyond \eqref{eq:sys_man}. 
\end{itemize}

\section{Methodology}
\label{sec:alg}
We present our proposed methodology, and then in the next sections, we show that it satisfies objectives (O1)-(O4).
%In this section, we start by presenting some existing approaches for solving the tracking problem for \eqref{eq:sys_man} and their limitations. Inspired by this discussion, we then present our proposed methodology, and prove that it satisfies objectives (O1)-(O2). We then provide detailed remarks on how our proposed approach also satisfies objectives (O3)-(O4). 

A standard approach for solving the tracking control problem for \eqref{eq:sys_man} is inverse dynamics control. Since $M(q)$ is positive definite by assumption, it is invertible. Hence, it is evident that if the matrices $M$, $C$, and $g$ are all known, then the following inverse dynamics control law
\begin{equation}
\label{eq:inv_ideal}
u=C(q,\dot{q})\dot{q}+g(q)+M(q)a_q
\end{equation}
converts the complex nonlinear dynamic system \eqref{eq:sys_man} into 
\begin{equation}
\label{eq:lin_sys_ideal}
\ddot{q}=a_q,
\end{equation} 
where $a_q$ is the commanded acceleration, a new input to the linearized system \eqref{eq:lin_sys_ideal} to be calculated by an outer-loop control law, e.g., a PD controller (see Figure \ref{fig:structure}). However, the standard inverse dynamics control \eqref{eq:inv_ideal} heavily depends on accurate knowledge of the system parameters. In practice, the matrices $M$, $C$, and $g$ are not perfectly known, and consequently, one has to use estimated values of these matrices $\hat{M}$, $\hat{C}$, and $\hat{g}$, respectively, where $\hat{M}$, $\hat{C}$, and $\hat{g}$ are composed of smooth functions. Hence, in practice, the control law \eqref{eq:inv_ideal} should be replaced with
\begin{equation}
\label{eq:inv_prac}
u=\hat{C}(q,\dot{q})\dot{q}+\hat{g}(q)+\hat{M}(q)a_q.
\end{equation}     
Now by plugging \eqref{eq:inv_prac} into the system model \eqref{eq:sys_man}, we get
\begin{equation}
\label{eq:lin_sys_prac}
\ddot{q}=a_q+\eta(q,\dot{q},a_q),
\end{equation}   
where $\eta(q,\dot{q},a_q)=M^{-1}(q)(\tilde{M}(q)a_q+\tilde{C}(q,\dot{q})\dot{q}+\tilde{g}(q))$, with $\tilde{M}=\hat{M}-M$, $\tilde{C}=\hat{C}-C$, and $\tilde{g}=\hat{g}-g$. 
It can be shown that even if the left hand side (LHS) of~\eqref{eq:sys_man} has a smooth, unstructured, added uncertainty $E(q,\dot{q})$, e.g., unmodeled friction, \eqref{eq:lin_sys_prac} is still valid with modified~$\eta$.   
%\footnote{For system \eqref{eq:sys_man}, we derived that the uncertainty $\eta$ resulting from the inverse dynamics control is an additive vector. If for another system, one has a multiplicative uncertainty, i.e., $\ddot{q}=\eta_m(q,\dot{q},a_q)a_q$, where $\eta_m$ is a scalar function, then one can write $\ddot{q}=a_q+\eta(q,\dot{q},a_q)$, where $\eta(q,\dot{q},a_q)=(-1+\eta_m(q,\dot{q},a_q))a_q$, and still use the results of this paper.}. 
Because of $\eta$, the dynamics \eqref{eq:lin_sys_prac} resulting from the inverse dynamics control are still nonlinear and coupled. To control the uncertain system \eqref{eq:lin_sys_prac}, on the one hand, robust control methods are typically very conservative, while on the other hand, learning methods do not provide stability guarantees.

Hence, in this paper, we combine ideas from robust control theory with ideas from machine learning, particularly Gaussian processes (GPs) for regression, to provide a robust, learning-based control strategy that satisfies objectives (O1)-(O4). The main idea behind our proposed approach is to use GPs to learn the uncertainty vector $\eta(q,\dot{q},a_q)$ in \eqref{eq:lin_sys_prac} online. Following \cite{Kulic}, we use a set of $N$ independent GPs, one for learning each element of $\eta$, to reduce the complexity of the regression. It is evident that conditioned on knowing $q,~\dot{q}$, and $a_q$, one can learn each element of $\eta$ independently from the rest of the elements of $\eta$. A main advantage of GP regression is that it does not only provide an estimated value of the mean $\mu$, but it also provides an expected variance $\sigma^2$, which represents the accuracy of the regression model based on the distance to the training data. The punchline here is that one can use both the mean and variance of the GP to calculate an upper bound $\rho$ on~$\|\eta\|$ that is guaranteed to be correct with high probability, as we will show later in this section. One can then use this upper bound to design a robust, outer-loop controller that ensures robust stability of the overall system. Hence, our proposed strategy consists of three parts: 
%\begin{itemize}

{\bf \emph{(i)} \emph{Inner-Loop Controller:}} We use the inverse dynamics control law \eqref{eq:inv_prac}, where $\hat{M}$, $\hat{C}$, and $\hat{g}$ are estimated values of the system matrices from an a-priori model.

{\bf \emph{(ii)} \emph{GPs for Learning the Uncertainty:}} We use a set of $N$ GPs to learn the uncertainty vector $\eta$ in \eqref{eq:lin_sys_prac}. We start by reviewing GP regression \cite{GP_conf,GP_felix}. A GP is a nonparametric regression model that is used to approximate a nonlinear function $J(a):\cA\rightarrow \RR$, where $a\in \cA$ is the input vector. The ability of the GP to approximate the function is based on the assumptions that function values $J(a)$ associated with different values of $a$ are random variables, and that any finite number of these variables have a joint Gaussian distribution. The GP predicts the value of the function, $J(a^*)$, at an arbitrary input $a^*\in \cA$ from a set of $n$ observations $D_{n}:=\{a_j,\hat{J}(a_j)\}_{j=1}^{n}$, where $\hat{J}(a_j)$, $j\in\{1,\cdots,n\}$, are assumed to be noisy measurements of the function's true values. That is, $\hat{J}(a_j)=J(a_j)+\omega'$, where $\omega'$ is a zero mean Gaussian noise with variance $\sigma_\omega^2$. Assuming, without loss of generality (w.l.o.g.), a zero prior mean of the GP and conditioned on the previous observations, the mean and variance of the GP prediction are given by: 
\begin{equation}
\label{eq:GP_mean}
\mu_n(a^*)={\bf k}_n(a^*)({\bf K}_n+{\bf I}_n\sigma_\omega^2)^{-1}{\bf \hat{J}}_n,
\end{equation} 
\begin{equation}
\label{eq:GP_variance}
\sigma_n^2(a^*)=k(a^*,a^*)-{\bf k}_n(a^*)({\bf K}_n+{\bf I}_n\sigma_\omega^2)^{-1}{\bf k}_n^T(a^*),
\end{equation}
respectively, where ${\bf \hat{J}}_n=[\hat{J}(a_1),\cdots,\hat{J}(a_n)]^T$ is the vector of observed, noisy function values. The matrix ${\bf K}_n\in \RR^{n\times n}$ is the covariance matrix with entries $[{\bf K}_n]_{(i,j)}=k(a_i,a_j)$, $i,~j\in\{1,\cdots,n\}$, where $k(a_i,a_j)$ is the covariance function defining the covariance between two function values $J(a_i),~J(a_j)$ (also called the kernel). The vector ${\bf k}_n(a^*)=[k(a^*,a_1),\cdots,k(a^*,a_n)]$ contains the covariances between the new input and the observed data points, and ${\bf I}_n\in \RR^{n\times n}$ is the identity matrix.     
The tuning of the GP is typically done through the selection of the kernel function and the tuning of its hyperparameters. For information about different standard kernel functions, please refer to \cite{GP_conf}.  
%For the GP, we assume without loss of generality (w.l.o.g.) that it has a zero a-priori mean. Then, we define

We next discuss our implementation of the GPs. The GPs run in discrete time with sampling interval $T_s$. At a sampling instant $k$, the inputs to each GP regression model are the same $(q(k),\dot{q}(k),{a_q}(k))$, and the output is an estimated value of an element of the $\eta$ vector at $k$. For the training data for each GP, $n$ observations of $(q,\dot{q},a_q)$ are used as the labeled input together with $n$ observations of an element of the vector $\ddot{q}-a_q+{\bf \omega}_v$ as the labeled output, where ${\bf \omega}_v\in \RR^N$ is Gaussian noise with zero mean and variance $diag(\sigma_{\omega_1}^2,\cdots,\sigma_{\omega_N}^2)$; see~\eqref{eq:lin_sys_prac}. For selecting the $n$ observations, we use the oldest point (OP) scheme for simplicity; this scheme depends on removing the oldest observation to accommodate for a new one \cite{adaptive1}. We use the squared exponential kernel 
\begin{equation}
\label{eq:GP_kernel1}
k(a_i,a_j)=\sigma_{\eta}^2e^{-\frac{1}{2}(a_i-a_j)^T\bar{M}^{-2}(a_i-a_j)},%(1+\sqrt{3}r(a_i,a_j))e^{-\sqrt{3}r(a_i,a_j)},
\end{equation} 
%\begin{equation}
%\label{eq:GP_kernel2}
%r(a_i,a_j)=\sqrt{(a_i-a_j)^TM^{-2}(a_i-a_j)},
%\end{equation}
which is parameterized by the hyperparameters: $\sigma_\eta^2$, the prior variance, and the positive length scales $l_1,\cdots,l_{3N}$ which are the diagonal elements of the diagonal matrix $\bar{M}$. Hence, the expected mean and variance of each GP can be obtained through equations \eqref{eq:GP_mean}-\eqref{eq:GP_kernel1}. Guidelines for tuning the GP hyperparameters $\sigma_\omega^2,\sigma_\eta^2,l_1,\cdots,l_{3N}$ can be found in \cite{GP_felix}. 

As stated before, a main advantage of GP regression is that the GP provides a variance, which represents the accuracy of the regression model based on the distance between the new input and the training data. 
One can then use the predicted mean and variance of the GP to provide a confidence interval around the mean that is guaranteed to be correct with high probability. There are several comprehensive studies in the machine learning literature on calculating these confidence intervals.
%Unfortunately, it is a well-known fact in the GP regression that with iteration-independent confidence intervals, e.g., $[\mu_n-3\sigma_n,\mu_n+3\sigma_n]$, the probability guarantee will rapidly drop to zero as the GP run time increases \cite{GP_conf}. Instead, there are many deep studies in the machine learning literature on using iteration-dependent size of the confidence interval that increases with run time very slowly, and is proved to be correct with high probability for all sampling instants. 
For completeness, we review one of these results, particularly Theorem 6 of \cite{GP_conf}. Let $a_{aug}=(q,\dot{q},a_q)$, and $\eta_i(a_{aug})$ denote the $i$-th element of the unknown vector $\eta$. 

\begin{assumption}
\label{assum0}
The function $\eta_i(a_{aug})$, $i\in\{1,\cdots,N\}$, has a bounded RKHS norm $\|\eta_i\|_{k}$ with respect to the covariance function $k(a,a')$ of the GP, and the noise $\omega_i$ added to the output observations, $i\in\{1,\cdots,N\}$, is uniformly bounded by $\bar{\sigma}$.  
\end{assumption}     

The RKHS norm is a measure of the function smoothness, and its boundedness implies that the function is well-behaved in the sense that it is regular with respect to the kernel \cite{GP_conf}. Intuitively, Assumption \ref{assum0} does not hold if the uncertainty~$\eta$ is discontinuous, e.g., discontinuous friction. 

\begin{lemma}[Theorem 6 of \cite{GP_conf}]
\label{lemma1}
Suppose that Assumption \ref{assum0} holds. Let $\delta_{p}\in (0,1)$. Then, $\Pr\{\forall a_{aug} \in \cA,\|\mu(a_{aug})-\eta_i(a_{aug})\|\leq \beta^{1/2}\sigma(a_{aug})\}\geq 1-\delta_p$,     
where $\Pr$ stands for the probability, $\cA\subset \RR^{3N}$ is compact, $\mu(a_{aug}),~\sigma^2(a_{aug})$ are the GP mean and variance evaluated at $a_{aug}$ conditioned on $n$ past observations, and $\beta=2\|\eta_i\|_{k}^2+300\gamma \ln^3((n+1)/{\delta}_p)$. The variable $\gamma\in \RR$ is the maximum information gain and is given by $\gamma=\max_{\{a_{aug,1},\cdots,a_{aug,n+1}\}\in \cA}0.5~\log(\det(I+\bar{\sigma}^{-2}{\bf K}_{n+1}))$, where $\det$ is the matrix determinant, $I\in \RR^{(n+1)\times (n+1)}$ is the identity matrix, ${\bf K}_{n+1}\in \RR^{(n+1)\times (n+1)}$ is the covariance matrix given by $[{\bf K}_{n+1}]_{(i,j)}=k(a_{aug,i},a_{aug,j})$, $i,~j\in\{1,\cdots,n+1\}$.  
%${\bf K}_{n+1}=[k(a,a')]_{a,a'\in\{a_{aug,1},\cdots,a_{aug,n+1}\}}$, 
%$and $a_{aug,n+1}=a_{aug}$.
\end{lemma}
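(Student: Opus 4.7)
The plan is to follow the standard GP concentration argument from the cited reference \cite{GP_conf}, which rests on two pillars: (i) identifying the GP posterior mean $\mu_n$ with the kernel ridge regression estimator in the RKHS $\cH_k$, so that the Bayesian formulas \eqref{eq:GP_mean}--\eqref{eq:GP_variance} can be analyzed purely as frequentist kernel estimators, and (ii) a self-normalized concentration inequality that translates observation noise into a data-dependent variance bound involving the information gain $\gamma$.

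First I would set up the RKHS view. By the reproducing property, $\eta_i(a_{aug})=\la \eta_i,k(\cdot,a_{aug})\ra_{\cH_k}$, and a routine calculation shows that the posterior mean \eqref{eq:GP_mean} is the unique minimizer over $f\in\cH_k$ of the penalized objective $\sum_{j=1}^{n}(f(a_{aug,j})-\hat{J}(a_{aug,j}))^2+\sigma_\omega^2\|f\|_{\cH_k}^2$. This lets me decompose
\[
\mu_n(a_{aug})-\eta_i(a_{aug}) = T_1(a_{aug}) + T_2(a_{aug}),
\]
where $T_1$ is the regularization bias (because the true $\eta_i$ is not itself the minimizer of the penalized objective) and $T_2=\mathbf{k}_n(a_{aug})(\mathbf{K}_n+\sigma_\omega^2 I_n)^{-1}\mathbf{w}_n$ is a linear functional of the noise vector $\mathbf{w}_n$.

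Next I would bound $T_1$ and $T_2$ separately. For $T_1$, Cauchy--Schwarz in $\cH_k$ combined with the explicit form of $\sigma_n^2$ yields $|T_1(a_{aug})|\leq \|\eta_i\|_{\cH_k}\,\sigma_n(a_{aug})$, which accounts for the $2\|\eta_i\|_k^2$ summand inside $\beta$. For $T_2$, I would invoke a self-normalized RKHS concentration inequality (the mixture-of-Gaussians / Laplace-method supermartingale argument of the reference), obtaining with probability at least $1-\delta_p$ a bound of the form $|T_2(a_{aug})|\leq \sigma_n(a_{aug})\sqrt{C(\gamma,\delta_p)}$, where $C$ scales with $\ln\det(I+\bar{\sigma}^{-2}\mathbf{K}_n)$---a quantity upper bounded by $2\gamma$ by definition of the maximum information gain---plus a $\ln(1/\delta_p)$ term.

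The main obstacle is upgrading these pointwise-in-$a_{aug}$ bounds to a statement that is uniform over $\cA$, and it is precisely this step that produces the $\ln^3((n+1)/\delta_p)$ factor and the universal constant $300$. I would handle it via a discretization-and-union-bound device: cover the compact set $\cA\subset\RR^{3N}$ with a grid at resolution $\tau_n$ that decays polynomially in $n$, apply the union bound over the $O(\tau_n^{-3N})$ grid points, and use Lipschitz continuity of $k$ (and hence of $\mu_n$ and $\sigma_n$ on compacts) to control the interpolation error between grid points; optimizing $\tau_n$ and tracking the logarithmic factors then produces the stated $\beta$. The most delicate technical point is the self-normalized inequality itself, because the queries $a_{aug,j}$ are generated by the closed-loop controller and are therefore only adapted to a filtration rather than i.i.d., forcing a martingale concentration argument in place of a direct Chernoff bound.
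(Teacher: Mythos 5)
The paper does not actually prove this lemma: it is imported verbatim as Theorem~6 of \cite{GP_conf}, so there is no in-paper argument to compare yours against. Judged against the proof in the cited source, your skeleton is largely the right one --- identifying the posterior mean \eqref{eq:GP_mean} with the kernel ridge regression estimator, splitting $\mu_n-\eta_i$ into a regularization-bias term controlled by $\|\eta_i\|_{k}\,\sigma_n(a_{aug})$ and a noise functional controlled by a self-normalized martingale inequality whose scale is tied to $\log\det(I+\bar{\sigma}^{-2}\mathbf{K}_n)\le 2\gamma$, with the martingale (rather than i.i.d.) treatment forced by the closed-loop, filtration-adapted queries. The combination $(a+b)^2\le 2a^2+2b^2$ then yields the $2\|\eta_i\|_k^2$ summand in $\beta$, as you say.

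The concrete gap is your final step. In the agnostic/RKHS setting of Theorem~6, uniformity over the compact set $\cA$ does \emph{not} come from a covering-and-union-bound device, and the factor $300\gamma\ln^3((n+1)/\delta_p)$ is not produced by optimizing a grid resolution $\tau_n$. Both the bias term and the noise term are of the form $\la v, k(\cdot,a_{aug})\ra$ for an element $v$ that does not depend on $a_{aug}$, so Cauchy--Schwarz bounds each by $\sigma_n(a_{aug})$ times an $a_{aug}$-independent scalar; once the scalar concentration event holds, the bound holds simultaneously for every $a_{aug}\in\cA$ with no discretization and no dependence on the covering number of $\cA$ (indeed the resulting $\beta$ contains no dimension-dependent $O(\tau_n^{-3N})$ entropy term, which your route would necessarily introduce). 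The discretization-plus-Lipschitz argument you describe belongs to the Bayesian-prior version of the result (Theorem~2 of the same reference), not Theorem~6. The $\ln^3(\cdot/\delta_p)$ and the constant $300$ instead arise from the self-normalized concentration over the sample index (a union bound over time steps combined with the uniform noise bound $\bar{\sigma}$ of Assumption~\ref{assum0}), so you should replace the spatial covering step with that temporal concentration argument to recover the stated $\beta$.
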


Finding the information gain maximizer can be approximated by an efficient greedy algorithm \cite{GP_conf}. Indeed, $\gamma$ has a sublinear dependence on $n$ for many commonly used kernels, and can be numerically approximated by a constant \cite{Kulic}. 

The punchline here is that we know from Lemma \ref{lemma1} that one can define for each GP a confidence interval around the mean that is guaranteed to be correct for all points $a_{aug}\in \cA$, a compact set, with probability higher than $(1-\delta_p)$, where $\delta_p$ is typically picked very small. Let $\mu_{k,i}$ and $\sigma_{k,i}^2$ represent the expected mean and variance of the $i$-th GP at the sampling instant $k$, respectively, and let $\beta_i$ denote the $\beta$ parameter in Lemma \ref{lemma1} of the $i$-th GP, where $i\in\{1,\cdots,N\}$. 
%Let $\mu_{k,i}$, $\sigma_{k,i}^2$, and $\beta_{k,i}$ represent the expected mean, variance, and $\beta$ parameter in Lemma \ref{lemma1} of the $i$-th GP at the sampling instant $k$, respectively, where $i\in\{1,\cdots,N\}$. 
We select the upper bound on the absolute value of $\eta_i$ at $k$ to be  
\begin{equation}
\label{eq_rho_i}
\rho_{k,i}(\mu_{k,i},\sigma_{k,i})=\max(|\mu_{k,i}-\beta_i^{1/2}\sigma_{k,i}|,|\mu_{k,i}+\beta_i^{1/2}\sigma_{k,i}|). 
\end{equation}
%A main advantage of the GP is that it ensures that the value of the unknown element $\eta_i(k)$ of the uncertainty vector $\eta(k)$, $i\in\{1,\cdots,N\}$, lies in the confidence region $[\mu_i(k)-n\sigma_i(k),\mu_i(k)+n\sigma_i(k)]$ with high probability $\beta$, where $\beta=erf(n/\sqrt{2})$ and $erf(x):=(2/\sqrt{\pi})\int_{0}^{x}e^{-s^2}ds$. For instance, for $n=3$, the value of $\eta_i(k)$ lies in the confidence region $[\mu_i(k)-3\sigma_i(k),\mu_i(k)+3\sigma_i(k)]$ with probability $99.73/\%$, and so on. Hence, we select the upper bound on each element of $\eta(k)$, $\eta_i(k)$, to be 
%\begin{equation}
%\label{eq_rho_i}
%\rho_i(\mu_i(k),\sigma_i(k))=\max(|\mu_i(k)-n\sigma_i(k)|,|\mu_i(k)+n\sigma_i(k)|). 
%\end{equation}
Then, a good estimate of the upper bound on $\|\eta\|$ at $k$ is %the sampling instant $k$, $\rho_k$, is 
\begin{equation}
\label{eq_rho}
\rho_k=\sqrt{\rho_{k,1}(\mu_{k,1},\sigma_{k,1})^2+\cdots+\rho_{k,N}(\mu_{k,N},\sigma_{k,N})^2}.       
\end{equation}

%Note that although the value of the unknown function $\eta_i$ is guaranteed to lie in the confidence interval defined in Lemma \ref{lemma1} at all sampling instants with high probability, $\beta_k$ defining the confidence interval in Lemma \ref{lemma1} blows up as $k\rightarrow \infty$. From a practical perspective, this happens too slowly so that one can still learn the unknown function \cite{GP_conf}. However, this poses challenges on providing any asymptotic guarantees of the overall control system behavior as $k\rightarrow \infty$. To address this challenge, we propose in the next section a slightly-tailored version of the upper bound \eqref{eq_rho} that can be used in providing asymptotic guarantees of the system behavior.   

{\bf \emph{(iii)} \emph{Robust, Outer-Loop Controller:}} We use the estimated upper bound $\rho_k$ to design a robust, outer-loop controller. In particular, for a smooth, bounded desired trajectory $q_d(t)$, we use the outer-loop control law 
\begin{equation}
\label{eq:outer_controller}
a_q(t)=\ddot{q}_d(t)+K_P(q_d(t)-q(t))+K_D(\dot{q}_d(t)-\dot{q}(t))+r(t),
\end{equation} 
where $K_P\in \RR^{N\times N}$ and $K_D\in \RR^{N\times N}$ are the proportional and derivative matrices of the PD control law, respectively, and $r\in \RR^N$ is an added vector to the PD control law that will be designed to achieve robustness. Let $e(t):=(q(t)-q_d(t),\dot{q}(t)-\dot{q}_d(t))$ denote the tracking error vector. From \eqref{eq:outer_controller} and \eqref{eq:lin_sys_prac}, it can be shown that the tracking error dynamics are
\begin{equation}
\label{eq:tracking_error}
\dot{e}(t)=Ae(t)+B(r(t)+\eta(q(t),\dot{q}(t),a_q(t))),
\end{equation}
where 
\begin{equation}
\label{eq:AandB}
A=\left[ 
\begin{array}{cc}
0 & I  \\
-K_P & -K_D  
\end{array} 
\right]\in \RR^{2N\times2N},~~
B=\left[ 
\begin{array}{rr}
0 \\ I \end{array} 
\right]\in \RR^{2N\times N},
\end{equation}
 and $I\in \RR^{N\times N}$ is the identity matrix. From \eqref{eq:tracking_error} and \eqref{eq:AandB}, it is clear that the controller matrices $K_P$ and $K_D$ should be designed to make $A$ a Hurwitz matrix. 

We now discuss how to design the robustness vector $r(t)$. To that end, let $P\in \RR^{2N\times 2N}$ be the unique positive definite matrix satisfying $A^TP+PA=-Q$, where $Q\in \RR^{2N\times 2N}$ is a positive definite matrix. We define $r(t)$ as follows
\begin{equation}
\label{eq:robust_term}
r(t)=
\begin{cases}
-\rho(t)\frac{B^TPe(t)}{\|B^TPe(t)\|} \text{~~~~$\|B^TPe(t)\|>\epsilon$,}
\\
-\rho(t)\frac{B^TPe(t)}{\epsilon} \text{~~~~~~$\|B^TPe(t)\|\leq \epsilon$,}
\end{cases}
\end{equation}       
where $\rho(t)\in \RR$ is the last received upper bound on $\|\eta\|$ from the GPs, i.e., we use 
\begin{equation}
\label{rho_zoh}
\rho(t)=\rho_k, \forall t\in [kT_s,(k+1)T_s), 
\end{equation}
and $\epsilon$ is a small positive number. It should be noted that $\epsilon$ is a design parameter that can be selected to ensure high-accuracy tracking, as we will discuss in the next section.
%\end{itemize}

\section{Theoretical Guarantees}
\label{sec:theory}
After discussing the proposed strategy, we now justify that it satisfies both robust stability and high-accuracy tracking. To that end, we require the following reasonable assumption:

%\begin{assumption}
%\label{assum1}
%The uncertainty vector $\eta(q,\dot{q},a_q)$ in \eqref{eq:lin_sys_prac} can be modeled by $N$ independent GPs. %, and the used GP regression models are well tuned.   
%\end{assumption}

%The assumption says that conditioned on the $n$ observations of $(a_q,q,\dot{q})$, knowing an element $\eta_i$ does not provide any additional information for estimating $\eta_j$, for $i\neq j$.  

\begin{assumption}
\label{assum2}
The GPs run at a sufficiently fast sampling rate such that the calculated upper bound on $\|\eta\|$ is accurate between two consecutive sampling instants.  
\end{assumption}

We impose another assumption to ensure that the added robustness vector $r(t)$ will not cause the uncertainty vector norm $\|\eta(q(t),\dot{q}(t),a_q(t))\|$ to blow up. It is easy to show that the uncertainty function $\eta(q,\dot{q},a_q)$ is smooth, and so $\|\eta\|$ attains a maximum value on any compact set in its input space $(q,\dot{q},a_q)$. However, since from \eqref{eq:outer_controller} and \eqref{eq:robust_term}, $a_q$ is a function of $\rho(t)$, an upper bound on $\|\eta\|$, one still needs to ensure the boundedness of $\|\eta\|$ for bounded $q,~\dot{q}$ or bounded tracking error $e$. Hence, we present the following assumption.  

\begin{assumption}
\label{assum3}
For a given, smooth, bounded desired trajectory $(q_d(t),\dot{q}_d(t))$, there exists $\bar{\rho}>0$ such that $\|\eta\|\leq \bar{\rho}$ for each $e\in D$, where $D$ is a compact set containing $\{e\in \RR^{2N}:e^TPe\leq e(0)^TPe(0)\}$, and $e(0)$ is the initial tracking error.   
%For a smooth, bounded desired trajectory $(q_d(t),\dot{q}_d(t))$, if the tracking error $e\in D$, where $D$ is a compact set, then there exists a fixed upper bound $\rho$ 
%Given a smooth, bounded desired trajectory $(q_d(t),\dot{q}_d(t))$, we assume that there exists a fixed upper bound $\bar{\rho}>0$ such that $\|\eta\|$ in \eqref{eq:lin_sys_prac} satisfies $\|\eta\|<\bar{\rho}$ for 
\end{assumption}
%We now justify that Assumption \ref{assum3} is reasonable, and that it is automatically satisfied for small uncertainties in the system matrices, particularly in the inertia matrix $M(q)$.

We now justify that Assumption \ref{assum3} is reasonable. In particular, we show that the assumption is satisfied for small uncertainties in the inertia matrix $M(q)$ \cite{spong}.  
%and that it is automatically satisfied for small uncertainties in the system matrices, particularly in the inertia matrix $M(q)$. 
In this discussion, we suppose that $\rho(t)$ in \eqref{eq:robust_term} satisfies $\rho(t)\leq \|\eta(t)\|+c$, where $c$ is a positive scalar, and study whether imposing $r(t)$ into \eqref{eq:outer_controller} can make $\|\eta(t)\|$ blow up. Recall that $\eta(q,\dot{q},a_q)=M^{-1}(q)(\tilde{M}(q)a_q+\tilde{C}(q,\dot{q})\dot{q}+\tilde{g}(q))$. From \eqref{eq:outer_controller}, we have $\eta(q,\dot{q},a_q)=M^{-1}(q)(\tilde{M}(q)r+\tilde{M}(q)(\ddot{q}_d+K_P(q_d-q)+K_D(\dot{q}_d-\dot{q}))+\tilde{C}(q,\dot{q})\dot{q}+\tilde{g}(q))$. It is evident that $\|\eta\|\leq \|M^{-1}(q)\tilde{M}(q)\|\|r\|+\|T_b(q,\dot{q},q_d,\dot{q}_d,\ddot{q}_d)\|$, where $T_b=M^{-1}(q)(\tilde{M}(q)(\ddot{q}_d+K_P(q_d-q)+K_D(\dot{q}_d-\dot{q}))+\tilde{C}(q,\dot{q})\dot{q}+\tilde{g}(q))$. From \eqref{eq:robust_term}, it is easy to verify $\|r(t)\|\leq \|\rho(t)\|=\rho(t)$, and so $\|r(t)\|\leq \|\eta(t)\|+c$. Hence, $\|\eta\|\leq \|M^{-1}(q)\tilde{M}(q)\|\|\eta\|+ \|M^{-1}(q)\tilde{M}(q)\|c+\|T_b\|$. Now if the uncertainty in the matrix $M(q)$, $\tilde{M}(q)$, is sufficiently small such that $\|M^{-1}(q)\tilde{M}(q)\|<1$ is satisfied, then $\|\eta\| \leq \frac{1}{1-\|M^{-1}(q)\tilde{M}(q)\|}(\|M^{-1}(q)\tilde{M}(q)\|c+\|T_b(q,\dot{q},q_d,\dot{q}_d,\ddot{q}_d\|)$. 
%\begin{align*}
%\|\eta\| &\leq \frac{1}{1-\|M^{-1}(q)\tilde{M}(q)\|}(\|M^{-1}(q)\tilde{M}(q)\|c\\&+T_b(q,\dot{q},q_d,\dot{q}_d,\ddot{q}_d\|).
%\end{align*}

%$\|\eta\|\leq \frac{1}{1-\|M^{-1}(q)\tilde{M}(q)\|}(\|M^{-1}(q)\tilde{M}(q)\|c+\|M^{-1}(q)(\tilde{M}(q)(\ddot{q}_d+K_P(q_d-q)+K_D(\dot{q}_d-\dot{q}))+\tilde{C}(q,\dot{q})\dot{q}+\tilde{g}(q))\|)$. 
Since $q_d,~\dot{q}_d,~\ddot{q}_d$ are all bounded by assumption, if $e(t)\in D$, a compact set, then $q(t),~\dot{q}(t)$ are also bounded. It is easy to show that there exists a fixed upper bound $\bar{\rho}$ on $\|\eta\|$ that is valid for each $e\in D$, and Assumption \ref{assum3} is satisfied. 

\begin{remark} 
We have shown that if the uncertainty in the matrix $M(q)$, $\tilde{M}(q)$, is sufficiently small such that $\|M^{-1}(q)\tilde{M}(q)\|<1$ is satisfied, then Assumption \ref{assum3} holds. This argument is true even if we have large uncertainties in the other system matrices, $C(q,\dot{q})$ and $g(q)$. As indicated in Chapter 8 of \cite{spong}, if the bounds on $\|M\|$ are known ($\underline{m} \leq \|M\| \leq \overline{m}$), then one can always select $\hat{M}$ such that $\|M^{-1}(q)\tilde{M}(q)\|<1$ is satisfied. In particular, by selecting $\hat{M}=\frac{\overline{m}+\underline{m}}{2}I$, where $I$ is the identity matrix, it can be shown that $\|M^{-1}(q)\tilde{M}(q)\|\leq \frac{\overline{m}-\underline{m}}{\overline{m}+\underline{m}}<1$. Consequently, it is not difficult to satisfy the condition $\|M^{-1}(q)\tilde{M}(q)\|<1$ in practice, and Assumption \ref{assum3} is not restrictive.
\end{remark}

%Now on the one hand, and as discussed in the previous section, $\beta_k$ defining the confidence region of a GP estimate will inevitably grow to infinity as $t\rightarrow \infty$, which prevents us from providing any asymptotic guarantees of the system behavior as $t\rightarrow \infty$. On the other hand, 
From Assumption \ref{assum3}, we know that $\|\eta(t)\|\leq \bar{\rho}$ if $e(t) \in D$, and consequently, it is reasonable to saturate any estimate of $\rho(t)$ beyond $\bar{\rho}$. Hence, we suppose that the estimation of $\rho$ is slightly modified to be
\begin{equation}
\label{eq:rho_new}
\rho(t)=\min(\rho_{GP}(t),\bar{\rho}),
\end{equation}
where $\rho_{GP}(t)$ is the upper bound on the uncertainty norm, $\|\eta\|$, calculated from the GPs in \eqref{eq_rho_i}, \eqref{eq_rho}, and \eqref{rho_zoh}. It is straightforward to show that with the choice of $\rho(t)$ in \eqref{eq:rho_new} and for bounded smooth trajectories, the condition $e(t)\in D$ for all $t\geq 0$ implies that $a_q(t)$ in \eqref{eq:outer_controller} is always bounded, and so $a_{aug}=(q,\dot{q},a_q)$ always lies in a compact set. To be able to provide theoretical guarantees, we also assume w.l.o.g. that the small positive number $\epsilon$ in \eqref{eq:robust_term} is selected sufficiently small such that
\begin{equation}
\label{eq:epsilon}
\sqrt{\frac{\epsilon \bar{\rho}}{2\lambda_{min}(Q)}}\ll \delta_1,
\end{equation}
where $\delta_1>0$ is such that $B_{\delta_1}(0)\subset \{e\in \RR^{2N}:e^TPe<e(0)^TPe(0)\}$, and $\lambda_{min}(Q)>0$ is the smallest eigenvalue of the positive definite matrix $Q$.

Based on Assumptions \ref{assum0}, \ref{assum2} and \ref{assum3}, we provide the following main result.  

%Next, we impose another assumption to ensure that the added robustness term $r(t)$ will not cause the uncertainty vector norm $\|\eta\|$ to blow up. It is easy to show that the uncertainty function $\eta(q,\dot{q},a_q)$ is smooth, and so $\|\eta\|$ attains a maximum value on any compact set in its input space $(q,\dot{q},a_q)$. However, since from \eqref{eq:outer_controller} and \eqref{eq:robust_term}, $a_q$ is a function of $\rho(t)$, an upper bound on $\|\eta\|$, one still needs to prove the boundedness of $\|\eta\|$ for bounded $q,~\dot{q}$ or bounded tracking error.   
%
%\begin{assumption}
%\label{assum3}
%The uncertainty in the inertia matrix $M(q)$, $\tilde{M}$, is small enough such that $\|M^{-1}(q)\tilde{M}(q)\|<1$.  
%\end{assumption}
%
%\begin{lemma}
%\label{lem1}
%Consider the Lagrangian System \eqref{eq:sys_man}, and suppose that Assumption \ref{assum3} holds. Suppose that the control strategy in \eqref{eq:inv_prac}, \eqref{eq:outer_controller}, and \eqref{eq:robust_term} is applied, and that in \cite{}, 
%\end{lemma}

\begin{theorem}
\label{thm:main}
Consider the Lagrangian system \eqref{eq:sys_man} and a smooth, bounded desired trajectory $(q_d(t),\dot{q}_d(t))$. Suppose that Assumptions \ref{assum0}, \ref{assum2}, and \ref{assum3} hold. Then, the proposed, robust, learning-based control strategy in \eqref{eq:inv_prac}, \eqref{eq:outer_controller}, and \eqref{eq:robust_term}, with the uncertainty upper bound $\rho$ calculated by \eqref{eq:rho_new} and the design parameter $\epsilon$ satisfying \eqref{eq:epsilon}, ensures with high probability of at least $(1-\delta_p)^N$ that the tracking error $e(t)$ is uniformly ultimately bounded with an ultimate bound that can be made arbitrarily small through the selection of the design parameter $\epsilon$.
\end{theorem}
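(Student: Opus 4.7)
The plan is to carry out a standard Lyapunov analysis on the error dynamics \eqref{eq:tracking_error}, using $V(e) = e^{T}Pe$ as the candidate Lyapunov function, and then convert the high-probability bound from Lemma \ref{lemma1} into a high-probability statement about uniform ultimate boundedness via a union bound over the $N$ independent GPs.

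First, I would combine Lemma \ref{lemma1} with the choice of $\rho_{k,i}$ in \eqref{eq_rho_i} to deduce that, for each $i$, with probability at least $1-\delta_p$ the inequality $|\eta_i(a_{aug})|\leq\rho_{k,i}$ holds uniformly on $\cA$; squaring and summing yields $\|\eta\|\leq \rho_k$ with joint probability at least $(1-\delta_p)^{N}$ for the $N$ independent GPs. Combined with Assumption \ref{assum2} (valid bound between sampling instants) and the saturation in \eqref{eq:rho_new} (which also ensures $\rho(t)\leq\bar{\rho}$), this gives $\|\eta(t)\|\leq\rho(t)\leq\bar{\rho}$ for all $t$ with the stated probability, as long as $e(t)$ remains in the compact set $D$ on which Assumption \ref{assum3} has been established.

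Next, I would differentiate $V$ along \eqref{eq:tracking_error} and use the Lyapunov equation $A^{T}P+PA=-Q$ to get
\begin{equation*}
\dot{V}=-e^{T}Qe+2(B^{T}Pe)^{T}(r(t)+\eta)
\end{equation*}
and split into the two cases of \eqref{eq:robust_term}. When $\|B^{T}Pe\|>\epsilon$, substituting $r=-\rho\,B^{T}Pe/\|B^{T}Pe\|$ and applying Cauchy–Schwarz with $\|\eta\|\leq\rho$ makes the second term nonpositive, so $\dot{V}\leq-\lambda_{\min}(Q)\|e\|^{2}$. When $\|B^{T}Pe\|\leq\epsilon$, substituting $r=-\rho\,B^{T}Pe/\epsilon$ yields $2(B^{T}Pe)^{T}(r+\eta)\leq -2(\rho/\epsilon)\|B^{T}Pe\|^{2}+2\rho\|B^{T}Pe\|$; maximizing this quadratic in $\|B^{T}Pe\|\in[0,\epsilon]$ gives the uniform bound $\rho\epsilon/2\leq\bar{\rho}\epsilon/2$. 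Hence in both cases $\dot{V}\leq-\lambda_{\min}(Q)\|e\|^{2}+\bar{\rho}\epsilon/2$, which is strictly negative whenever $\|e\|>\sqrt{\bar{\rho}\epsilon/(2\lambda_{\min}(Q))}$. This is the standard ultimate-boundedness inequality, and the ultimate bound shrinks to zero as $\epsilon\to 0$, delivering objective (O2).

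The step I expect to require the most care is closing the loop between Assumption \ref{assum3} and the Lyapunov estimate, because Assumption \ref{assum3} only gives $\|\eta\|\leq\bar{\rho}$ for $e\in D$, while the Lyapunov bound on $\dot V$ itself relies on that inequality. I would resolve this by invariance of a sublevel set: condition \eqref{eq:epsilon} places the ultimate ball $\{\|e\|\leq\sqrt{\bar{\rho}\epsilon/(2\lambda_{\min}(Q))}\}$ strictly inside $B_{\delta_{1}}(0)\subset\{e^{T}Pe<e(0)^{T}Pe(0)\}\subset D^{\circ}$. Thus on the boundary of the sublevel set $\{e:e^{T}Pe\leq e(0)^{T}Pe(0)\}$ one has $\dot{V}<0$, so this sublevel set is forward invariant, $e(t)\in D$ for all $t\geq 0$, and Assumption \ref{assum3} remains valid throughout the trajectory. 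The standard ultimate-boundedness theorem (e.g., Khalil, Theorem 4.18) then applies, and combining with the $(1-\delta_p)^{N}$ probability from the union bound over GPs yields the claim.
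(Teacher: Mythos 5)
Your proposal is correct and follows essentially the same route as the paper: the identical Lyapunov function $V(e)=e^TPe$, the same two-case analysis of the robust term yielding $\dot{V}\leq -e^TQe+\epsilon\bar{\rho}/2$, and the same probability argument via the $N$ independent GPs (note that $(1-\delta_p)^N$ comes from the product over independent events, not a union bound, which would only give $1-N\delta_p$). Your explicit forward-invariance argument for the sublevel set $\{e:e^TPe\leq e(0)^TPe(0)\}$ is a slightly more careful rendering of the step the paper handles by remarking that \eqref{eq:epsilon} places $B_\delta$ inside that sublevel set and hence inside $D$.
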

\begin{proof}
From Assumption \ref{assum3}, we know that $\|\eta(t)\|\leq \bar{\rho}$ when $e(t)\in D$, where $D$ is a compact set containing $\{e\in \RR^{2N}:e^TPe\leq e(0)^TPe(0)\}$. In the first part of the proof, we assume that the upper bound $\rho_{GP}(t)$ calculated by \eqref{eq_rho_i}, \eqref{eq_rho} and \eqref{rho_zoh} is a correct upper bound on $\|\eta(t)\|$ when $e(t)\in D$. Thus, in the first part of the proof, we know that $\rho(t)$ calculated by \eqref{eq:rho_new} is a correct upper bound on $\|\eta(t)\|$ when $e(t)\in D$, and we use Lyapunov stability analysis to prove that $e(t)$ is uniformly ultimately bounded. Then, in the second part of the proof, we use Lemma \ref{lemma1} 
%{\bf \emph{(existing Lemma that Felix told us about; remember that as Felix said the probability is high since the confidence region and hence $\rho_{GP}(t)$ will grow with time but this is fine since $\rho(t)$ will remain bounded from \eqref{eq:rho_new})}} 
to evaluate the probability of satisfying the assumption that $\rho_{GP}(t)$ is a correct upper bound on $\|\eta(t)\|$ when $e(t)\in D$, and hence, the probability that the provided guarantees hold.    
%on the probability of the confidence of the GPs to derive a probability on the satisfaction of the assumption that $\|eta\|<$
%prove that $\rho(t)$ is a correct upper bound with high probability $()^N$.

The first part of the proof closely follows the proof of the effectiveness of the robust controller in Theorem 3 of Chapter 8 of \cite{spong}, and we include the main steps of the proof here for convenience. Consider a candidate Lyapunov function $V(e)=e^TPe$. From \eqref{eq:tracking_error}, it can be shown that $\dot{V}=-e^TQe+2w^T(\eta+r)$, where $w=B^TPe$. Then, from \eqref{eq:robust_term}, we need to study two cases. 

For the case where $\|w\|>\epsilon$, we have 
\begin{align*}
w^T(\eta+r)&=w^T(\eta-\rho\frac{w}{\|w\|})=w^T\eta-\rho\|w\| \\ &\leq \|\eta\|\|w\|-\rho\|w\| 
\end{align*}
%$w^T(\eta+r)$$=$$w^T(\eta-\rho\frac{w}{\|w\|})$$=$$w^T\eta-\rho\|w\|\leq$$\|\eta\|\|w\|-\rho\|w\|$
from the Cauchy-Schwartz inequality. Since $\{e\in \RR^{2N}:e^TPe\leq e(0)^TPe(0)\}\subset D$ by definition and from Assumption \ref{assum3}, we know that $\|\eta\|\leq \bar{\rho}$. Also, by our assumption in this part of the proof, $\|\eta\|\leq \rho_{GP}$. Then, from \eqref{eq:rho_new}, $\|\eta\|\leq \rho$, and $w^T(\eta+r)\leq 0$. Thus, for this case, $\dot{V}\leq -e^TQe$, which ensures exponential decrease of the Lyapunov function. 

Next, consider the case where $\|w\|\leq \epsilon$. If $w=0$, then $\dot{V}=-e^TQe<0$. Then, for $\|w\|\leq \epsilon$ and $w\neq 0$, it is easy to show 
\[
\dot{V}=-e^TQe+2w^T(\eta+r)\leq-e^TQe+2w^T(\rho\frac{w}{\|w\|}+r).
\] 
From \eqref{eq:robust_term}, we have 
\[
\dot{V}\leq-e^TQe+2w^T(\rho\frac{w}{\|w\|}-\rho\frac{w}{\epsilon}). 
\]
It can be shown that the term $2w^T(\rho\frac{w}{\|w\|}-\rho\frac{w}{\epsilon})$ has a maximum value of $(\epsilon \rho)/2$ when $\|w\|=\epsilon/2$. Thus, $\dot{V}\leq -e^TQe+(\epsilon \rho)/2$. From \eqref{eq:rho_new}, $\rho\leq \bar{\rho}$, and consequently $\dot{V}\leq -e^TQe+(\epsilon \bar{\rho})/2$. If the condition $e^TQe>(\epsilon \bar{\rho})/2$ is satisfied, then $\dot{V}<0$. Since $Q$ is positive definite by definition, then $e^TQe\geq \lambda_{min}(Q)\|e\|^2$, where $\lambda_{min}(Q)>0$ is the smallest eigenvalue of $Q$. Hence, if $\lambda_{min}(Q)\|e\|^2>(\epsilon \bar{\rho})/2$, then $\dot{V}<0$. Thus, the Lyapunov function is strictly decreasing if $\|e\|>\sqrt{\frac{\epsilon \bar{\rho}}{2\lambda_{min}(Q)}}$. 
Let $B_{\delta}$ be the ball around the origin of radius $\delta:=\sqrt{\frac{\epsilon \bar{\rho}}{2\lambda_{min}(Q)}}$, $S_\delta$ be a sufficiently small sublevel set of the Lyapunov function V satisfying $\bar{B}_\delta\subset S_\delta^{\circ}$, and $B_c$ be the smallest ball around the origin satisfying $S_\delta\subset \bar{B}_c$. Since the Lyapunov function $V$ is strictly decreasing outside $\bar{B}_\delta$, the tracking error $e(t)$ eventually reaches and remains in $S_{\delta}\subset \bar{B}_c$, and so the tracking error $e(t)$ is uniformly ultimately bounded, and its ultimate bound is the radius of $B_c$. Note that from~\eqref{eq:epsilon}, $B_{\delta}\subset \{e\in \RR^{2N}:e^TPe<e(0)^TPe(0)\}\subset D$, and $\rho$ is a correct upper bound on $\|\eta\|$. One can see that $\delta$ and hence the radius of $B_c$ depend on the choice of the design parameter $\epsilon$. Indeed, $\epsilon$ can be selected sufficiently small to make $B_{\delta}$ and $B_c$ arbitrarily small. 
%Let $B_{\delta}$ be the ball around the origin of radius $\delta:=\sqrt{\frac{\epsilon \bar{\rho}}{2\lambda_{min}(Q)}}$, $S_\delta$ be the smallest sublevel set of the Lyapunov function V containing $B_\delta$, and $B_c$ be a sufficiently small ball around the origin containing $S_\delta$. Since the Lyapunov function $V$ is strictly decreasing outside $\bar{B}_\delta$, the tracking error $e(t)$ eventually reach and remain in $B_c$, and so the tracking error $e(t)$ is uniformly ultimately bounded, and its ultimate bound is the radius of $B_c$. Note that from~\eqref{eq:epsilon}, $B_{\delta}\subset \{e\in \RR^{2N}:e^TPe<e(0)^TPe(0)\}\subset D$, and $\rho$ is a correct upper bound of $\|\eta\|$. One can see that $\delta$ and hence the radius of $B_c$ depend on the choice of the design parameter $\epsilon$. Indeed, $\epsilon$ can be selected sufficiently small to make $B_{\delta}$ and $B_c$ arbitrarily small.              

In the second part of the proof, we calculate the probability of our assumption in the first part that $\rho_{GP}(t)$ is a correct upper bound on $\|\eta(t)\|$ when $e(t)\in D$. Recall that $e(t) \in D$ implies that $a_{aug}(t)$ is in a compact set, as discussed immediately after \eqref{eq:rho_new}.
%Recall that from our discussion immediately after \eqref{eq:rho_new}, $e(t)\in D$ for all $t\geq 0$ implies that $a_{aug}$ remains in a compact set. 
From Assumption \ref{assum2}, our problem reduces to calculating the probability that $\rho_{GP}$ is a correct upper bound on $\|\eta\|$ for all the sampling instants. 
%Then, from the way  $\rho_{GP}$ is calculated in \eqref{eq:rho_i} and \eqref{eq:rho}, this again reduces to studying the the probability that the confidence regions provided by all the GPs are all correct.  By Assumption \ref{assum0},   
%from Assumption \ref{assum1}, this again reduces to studying the probability that the confidence regions provided by the GPs are all correct. 
Using the confidence region proposed in Lemma \ref{lemma1} for calculating the upper bound on the absolute value of each element of $\eta$, and under Assumption \ref{assum0}, the probability that this upper bound is correct for all samples is higher than $(1-\delta_p)$ from Lemma \ref{lemma1}. 
Since the $N$ GPs are independent and the added noise to the output observations ${\bf \omega}_v$ is uncorrelated, then the probability that the upper bounds on the absolute values of all the elments of $\eta$, and hence the upper bound on $\|\eta(t)\|$, are correct is higher than $(1-\delta_p)^N$.    
\end{proof}

\begin{remark}
Although in practice it is difficult to estimate the upper bound $\bar{\rho}$ on $\|\eta\|$ used in \eqref{eq:rho_new}, one can be conservative in this choice. Unlike robust control techniques that keep this conservative bound unchanged, \eqref{eq:rho_new} would relax the upper bound $\bar{\rho}$ when the GPs learn a lower upper bound from collected data. Having a less-conservative upper bound $\rho$ results in a lower tracking error. It can be shown that if $\rho(t)\leq \rho'<\bar{\rho}$ for all $t$, then the tracking error will converge to an ultimate ball $B_{c'}$ smaller than $B_c$. 
\end{remark}

\begin{remark}
In theory, $\epsilon$ can be selected sufficiently small to ensure arbitrarily accurate tracking as shown in the proof of Theorem \ref{thm:main}. Achieving that for cases with large uncertainties may be limited by the actuation limits of the robots. Incorporating the actuation limits in the theoretical analysis is an interesting point for future research.
%While in theory $\epsilon$ can be selected sufficiently small to ensure arbitrarily accurate tracking as shown in the proof of Theorem \ref{thm:main}, in practice, having a smaller $\epsilon$ will result in a higher $r(t)$ from \eqref{eq:robust_term}, a higher $a_q(t)$, and a higher control action $u$. Thus, achieving arbitrarily accurate tracking may be limited by the actuation limits of the robots, which is a typical scenario in practice. Incorporating the actuation limits in the theoretical analysis is an interesting point for future research.   
\end{remark}

\section{Simulation Results} \label{sec:sim}
The proposed approach is first verified via simulations on a 2 DOF planar manipulator using MATLAB Simulink 
%Several simulations on a 2 DOF planar manipulator are performed using MATLAB Simulink to demonstrate the effectiveness of our approach.

%\subsection{Simulation Setup}
We use the robot dynamics \eqref{eq:sys_man} for the system, where $M$, $C$,
and $g$ are as defined in Chapter 7 of \cite{spong}. For the system parameters,
a value of $1~\mathrm{kg}$ is used for each link mass and $1~
\mathrm{kg\cdot{m^2}}$ for each link inertia. The length of the first link is
$2~ \mathrm{m}$ and that of the second link is $1~\mathrm{m}$. The joints are
assumed to have no mass and are not affected by friction. Then, it is assumed
that these parameters are not perfectly known. Thus, in the inverse dynamics
controller \eqref{eq:inv_prac}, we use parameters with different levels of
uncertainties. 
%, and for each uncertainty case, evaluate our proposed control
%strategy. 
The desired trajectories are sinusoidal trajectories with different
amplitudes and frequencies. All the simulation runs are initialized at zero
initial conditions.

We use $2$ GPs to learn the uncertainty vector $\eta$ in
\eqref{eq:lin_sys_prac}. Each GP uses the squared exponential kernel
parameterized with $\sigma_{\eta,{i}}=1$, $\sigma_{\omega,{i}}=0.001$, and
$l_{j,i}=0.5$, for all $j\in\{1,\cdots,6\}$ and $i\in \{1,2\}$. The GPs run at $10~\mathrm{Hz}$ and use the
past $n=20$ observations for prediction. 
To generate confidence intervals, we use $[\mu_k-3\sigma_k,\mu_k+3\sigma_k]$,
which is simple to implement and found to be effective in practice
\cite{GP_felix}. For the robust controller, we use
$\epsilon=0.001$. We set the upper bound $\bar{\rho}$ in \eqref{eq:rho_new}
to be a very high positive number to evaluate the effectiveness of the upper bound estimated by the GPs.
%\subsection{Results}

A sequence of \emph{12 trajectories} was run for \emph{3 different cases of
model uncertainty}. Each of the three cases makes the $\hat{M}$ matrix differ
from the $M$ matrix by using values for the estimated link masses that differ
from the true link mass values. In particular, in the three uncertainty cases,
the estimated mass differs from the actual mass by $10\%$, $20\%$, and $30\%$
for each link, respectively.

The tracking performance was compared between four controllers: a nominal
controller with no robust control, a robust controller with a fixed upper bound
on the uncertainty norm $\rho=1000$, a learning-based inverse dynamics controller in which GPs are used to learn the error of the nominal inverse model at the torque level $\Delta u$ and a non-robust outer-loop controller is used, and our proposed robust learning controller. The root-mean-square (RMS) error of the joint angles was averaged
over the 12 trajectories, and is presented for each controller and uncertainty
case in Table \ref{table:2dof_rmse}.
%The tracking performance was compared between three controllers: a nominal
%controller with no robust control, a robust controller with a fixed upper bound
%on the uncertainty norm $\rho=1000$, and our proposed, robust, learning-based
%controller. The root-mean-square (RMS) error of the joint angles was averaged
%over the 12 trajectories, and is presented for each controller and uncertainty
%case in Table \ref{table:2dof_rmse}.
\begin{table}[t]
 \vspace*{-1mm}
  \renewcommand{\arraystretch}{1.1}
  \footnotesize
  \caption{Average RMS Tracking Error (in rad) Over 12 Trajectories for Different Controllers on a 2 DOF Manipulator}
  \label{table:2dof_rmse}
  \centering
   \vspace*{-3mm}
    \begin{tabular}{ c | c c c c }
      \hline
      Uncertainty & Nominal & \thead{Fixed\\Robust} & Learning $\Delta u$ & \thead{{\bf Robust}\\{\bf Learning}}\\
      \hline
      $10\%$ & 0.1554 & 0.0476 & 0.0190 & 0.0082\\
      $20\%$ & 0.2793 & 0.0498 & 0.0319 & 0.0103\\
      $30\%$ & 0.3768 & 0.0519 & 0.0539 & 0.0141\\
      \hline
    \end{tabular}
\end{table}
\normalsize

%\begin{table}[t]
%  \renewcommand{\arraystretch}{1.0}
%  \caption{Average RMS Tracking Error (in rad) Over 12 Trajectories for Different Controllers on a 2 DOF Manipulator}
%  \label{table:2dof_rmse}
%  \centering
%    \begin{tabular}{ c | c c c }
%      \hline
%      Uncertainty & Nominal & Fixed Robust & Robust Learning\\
%      \hline
%      $10\%$ & 0.1557 & 0.0489 & 0.0065\\
%      $20\%$ & 0.2807 & 0.0512 & 0.0101\\
%      $30\%$ & 0.3793 & 0.0534 & 0.0148\\
%      \hline
%    \end{tabular}
%\end{table}

It is clear that while the robust controller with a high, fixed value for the
upper bound on the uncertainty improves the tracking performance compared to
the nominal controller, it is conservative, and thus, still causes considerable
tracking errors. The tracking errors are significantly reduced by our proposed
robust learning controller, which is able to learn a less conservative
upper bound on the uncertainty. On average, our proposed controller
reduces the tracking errors by $95.8\%$ compared to the nominal controller,
by $78.2\%$ compared to the fixed, robust controller, and by $66\%$ compared to the non-robust learning controller that learns $\Delta u$.

\section{Experimental Results} \label{sec:exp}

The proposed approach is further tested on a UR10 6~DOF industrial
manipulator (see Figure \ref{fig:ur10}) using the Robot Operating System (ROS).

\begin{figure}[t]
  \begin{center}
    \includegraphics[width=1.7in,trim = 0mm 97mm 0mm 50mm]{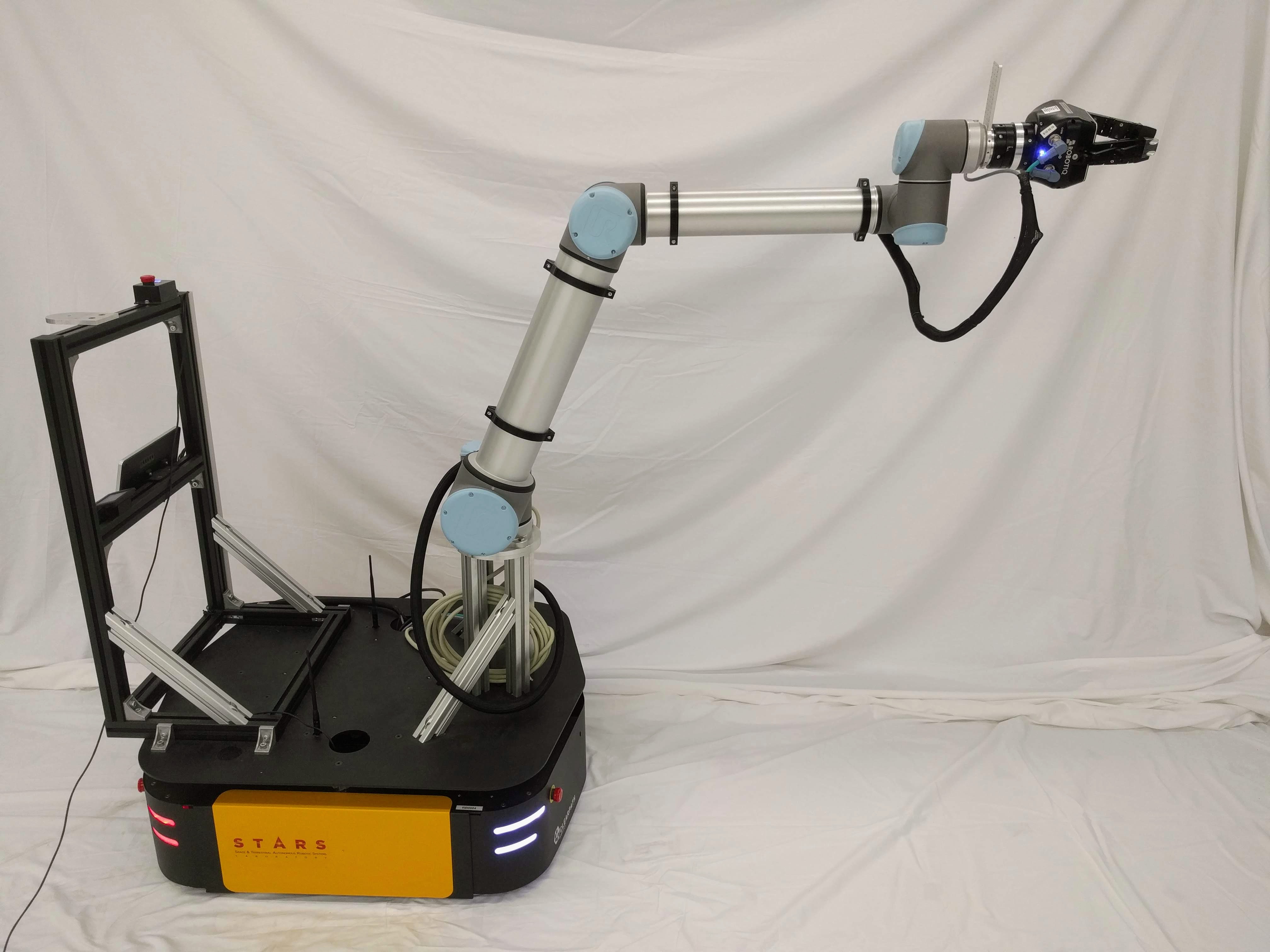}
  \end{center}
  \caption{The UR10 industrial manipulator used in the experiments.}
  \label{fig:ur10}
\end{figure}

\subsection{Experimental Setup}
The interface to the UR10 does not permit direct torque control. Instead, only
position and velocity control of the joints are available. Thus, for our proposed approach, we need to implement only the GP regression models and the robust, outer-loop controller. The commanded
acceleration $a_q$ calculated by the outer-loop controller in
\eqref{eq:outer_controller} is integrated to obtain a velocity command that can
be sent to the UR10. To test our approach for various uncertainties, we
introduce artificial model uncertainty by adding a function $\eta(q, \dot{q},
a_q)$ to our calculated acceleration command $a_q$.
%The dynamic and kinematic parameters of the system model are provided by the UR10
%robot manipulator's open-source Gazebo package in \cite{ROS}. 

The PD gains of the outer-loop controller are tuned to achieve good tracking performance on
a baseline desired trajectory in a nominal scenario with no added uncertainty. A desired trajectory of
$q_d=0.25(1-\cos(2.0t))$ for each joint is used for this purpose, with gains
selected to produce a total joint angle RMS error less than
$0.01~\mathrm{rad}$. This resulted in $K_p=7.0I$ and
$K_d=1.0I$, where $I$ is the identity matrix.

We use 6 GPs to learn the uncertainty vector~$\eta$, each of which uses the
squared exponential kernel. The prior variance and length scale hyperparameters
are optimized by maximizing the marginal likelihood function, while each noise
variance is set to $0.001$. Hyperparameter optimization is performed offline
using approximately 1000 data points collected while tracking sinusoidal trajectories under uncertainty $\eta=0.5\dot{q}$. Implementation and
tuning of the GPs are done with the Python library GPy. Each GP runs at $125~\mathrm{Hz}$, and uses the
past $n=50$ observations for prediction. For the confidence intervals, we use $[\mu_k-3\sigma_k,\mu_k+3\sigma_k]$ for simplicity \cite{GP_felix}. 
%with predictions running at a sample
%rate of $125~\mathrm{Hz}$. 
For the robust controller, we use $\epsilon=0.1$.

%The interface to the UR10 does not permit direct torque control. Instead, only
%position and velocity control of the joints are available. Thus, the commanded
%acceleration $a_q$ calculated by the outer-loop controller in
%\eqref{eq:outer_controller} is integrated to obtain a velocity command that can
%be sent to the UR10. To test our approach for various uncertainties, we
%introduce artificial model uncertainty by adding a function $\eta(q, \dot{q},
%a_q)$ to our calculated acceleration command $a_q$.

\subsection{Results}

The performance of the proposed robust learning controller is initially
compared to that of the nominal, outer-loop PD controller using a single
trajectory and various cases of model uncertainty. Ten different cases of
uncertainty of the form $\eta(q, \dot{q}, a_q)$ are tested over
the desired trajectory $q_d=0.25(1-\cos(2.0t))$ for each joint, with the
results displayed in Figure \ref{fig:many_eta}. The average RMS error of the
nominal controller is $0.111~\mathrm{rad}$ and that of the proposed
controller is $0.068~\mathrm{rad}$, yielding an average improvement of
$41.5\%$.

\begin{figure}[t]
  \centering
  \includegraphics[width=2.2in, trim = 0mm 10mm 0mm 1mm]{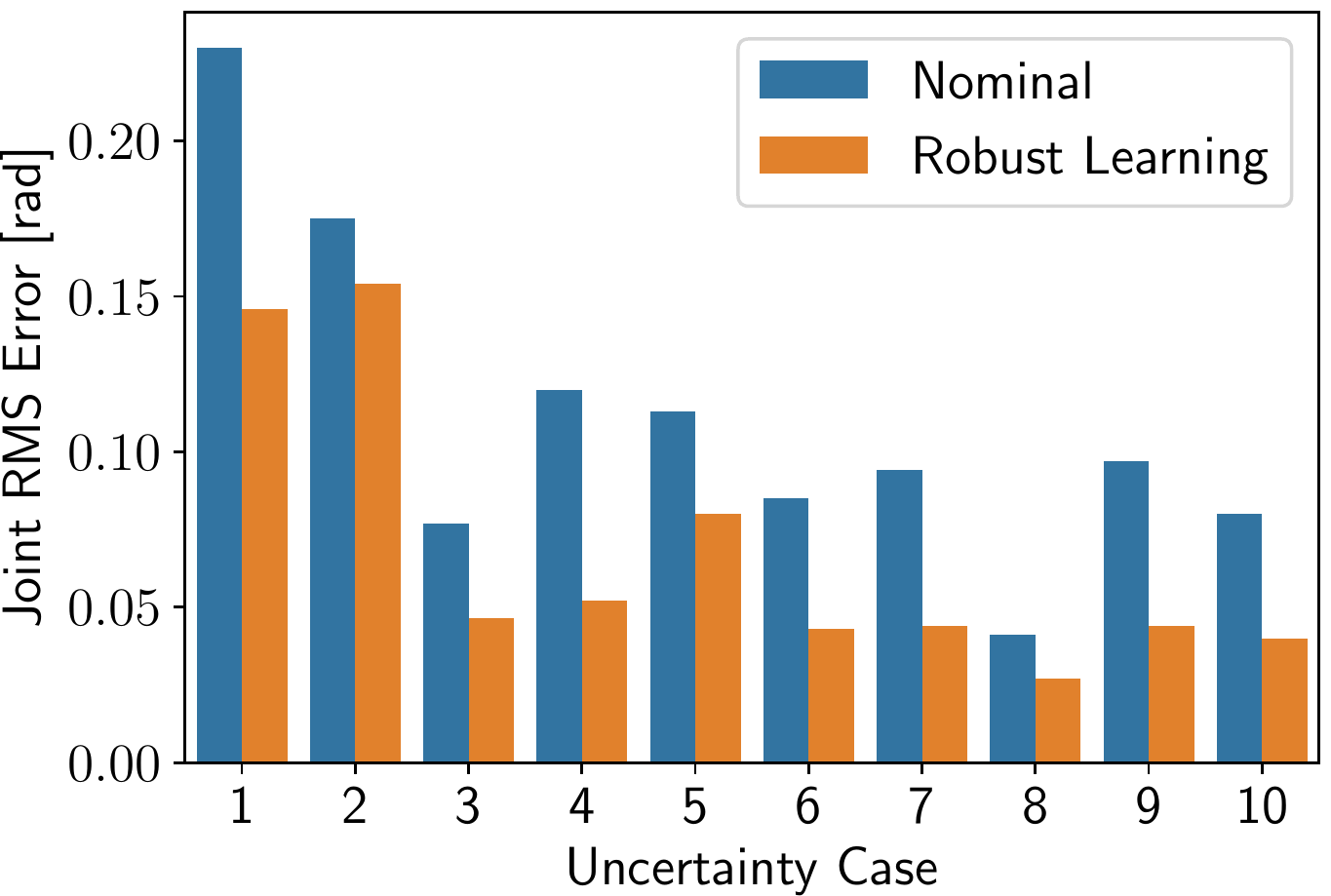}
  \caption{Joint RMS error of the nominal controller and the proposed robust
  learning controller for ten different uncertainties and a desired trajectory
  $q_d=0.25(1-\cos(2.0t))$ for each joint. The proposed method outperforms the
  nominal controller in all cases, with an average error reduction of
  $41.5\%$.}
  \label{fig:many_eta}
\end{figure}

Further experiments were performed to verify the generalizability of the proposed approach for different desired trajectories that cover different regions of
the state space. A single case of uncertainty, $\eta=0.3\dot{q}+0.01q\odot \dot{q}$ where $\odot$ is the entrywise product,   
is selected and the performance of the proposed and nominal controllers under
this uncertainty is compared on five additional trajectories. The results are
presented in Table \ref{table:many_traj}, with an average overall improvement
of $39.9\%$ compared to the nominal controller. The six trajectories are shown in a demo video at \url{http://tiny.cc/man-traj} 
%The end effector paths of the
%six trajectories are in Figure \ref{fig:trajs}.

%\begin{figure}[t]
%  \centering
%  \includegraphics[width=1.65in, trim = 0mm 12mm 0mm 15mm]{trajs.pdf}
%  \caption{The end effector position paths for trajectories from Table \ref{table:many_traj}.}
%  \label{fig:trajs}
%\end{figure}

\begin{table}[t]
 \vspace*{-2.5mm}
  \renewcommand{\arraystretch}{1.1}
\footnotesize
  \caption{RMS Tracking Error (in rad) for Various Trajectories with a Fixed
  Uncertainty Function $\eta=0.3\dot{q}+0.01q\odot \dot{q}$, where $\odot$ is the entrywise product}
  \label{table:many_traj}
  \centering
  \vspace*{-3mm}
    \begin{tabular}{ c | c c | c }
      \hline
      Trajectory & Nominal & {\bf Robust Learning} & Improvement\\
      \hline
%       1 & 0.085 & 0.043 & 49.4\%\\
%       2 & 0.058 & 0.037 & 36.2\%\\
%       3 & 0.070 & 0.037 & 47.1\%\\
%       4 & 0.092 & 0.058 & 37.0\%\\
%       5 & 0.029 & 0.021 & 27.6\%\\
%       6 & 0.050 & 0.029 & 42.0\%\\
       1 & 0.070 & 0.037 & 47.1\%\\
       2 & 0.058 & 0.037 & 36.2\%\\
       3 & 0.092 & 0.058 & 37.0\%\\
       4 & 0.085 & 0.043 & 49.4\%\\
       5 & 0.050 & 0.029 & 42.0\%\\
       6 & 0.029 & 0.021 & 27.6\%\\
      \hline
      Average & 0.064 & 0.038 & 39.9\%\\
      \hline
    \end{tabular}
 \vspace*{-4mm}
\end{table}
\normalsize

To verify the reliability of the proposed method, experiments for six
combinations of uncertainty and trajectory are repeated five times each with
both the nominal and proposed robust learning controllers. The results are
summarized in Figure \ref{fig:boxplot}. The figure shows that the performance
under our proposed controller is highly repeatable and that it outperforms the
nominal controller in all 30 cases.

\begin{figure}[t]
  \centering
  \includegraphics[width=2.1in, trim = 0mm 10mm 0mm 1mm]{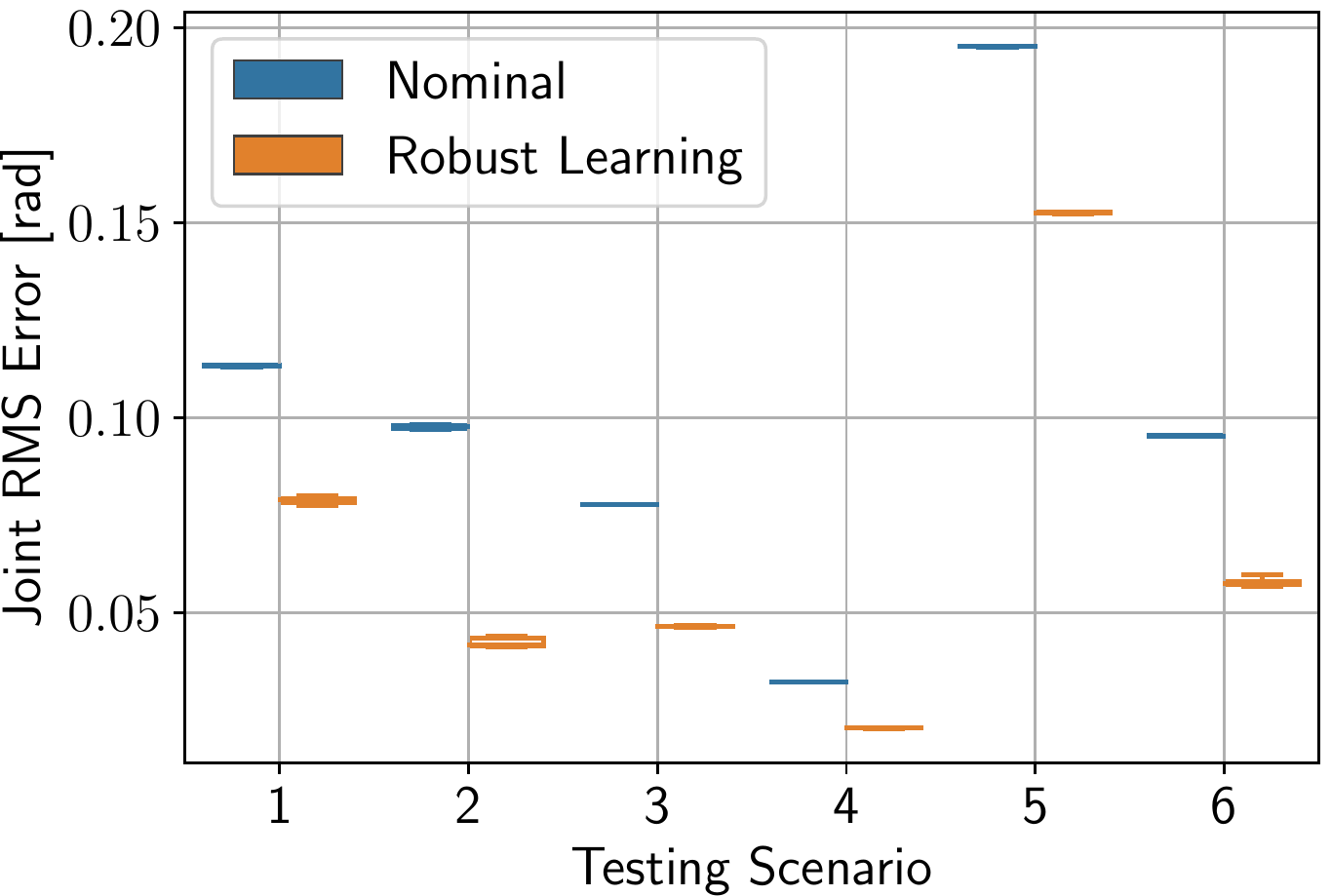}
  \caption{Box plots of six testing scenarios with different
  combinations of uncertainty and trajectory, each repeated five times. The
  bottom and top edges of each box show the 25\textsuperscript{th} and
  75\textsuperscript{th} percentiles, respectively. The whiskers show the
  extreme values. The system performance is very repeatable, and our proposed
  controller outperforms the nominal controller in all cases (30 experiments in total).}
  \label{fig:boxplot}
\end{figure}

\section{Conclusions}
\label{sec:con}
We have provided a novel, learning-based control strategy based on Gaussian processes (GPs) that ensures stability of the closed-loop system and high-accuracy tracking of smooth trajectories for an important class of Lagrangian systems. The main idea is to use GPs to estimate an upper bound on the uncertainty of the linearized model, and then use the uncertainty bound in a robust, outer-loop controller.
%In this strategy, we have used GPs to learn the uncertainty in the nonlinear system linearization, particularly, the error between the commanded and actual accelerations of the system. We have then used the mean and variance of the GPs predictions to calculate an upper bound on the uncertainty that is guaranteed to be correct with high probability. The calculated upper bound is then used in a robust, outer-loop controller. 
Unlike most of the existing, learning-based inverse dynamics control techniques, we have provided a proof of the closed-loop stability of the system that takes into consideration the regression errors of the learning module. Moreover, we have proved that the tracking error converges to a ball with a radius that can be made arbitrarily small. Furthermore, we have verified the effectiveness of our approach via simulations on a planar manipulator and experimentally on a 6 DOF industrial manipulator. 

\bibliographystyle{IEEEtranS}

\begin{thebibliography}{99}

\bibitem{spong}
M. W. Spong, S. Hutchinson, M. Vidyasagar. \emph{Robot Modeling and Control}. John Wiley \& Sons, Inc., 2006.

\bibitem{fflin}
V. Hagenmeyer, E. Delaleau. Exact feedforward linearization
based on differential flatness. \emph{International Journal of Control}, 76(6), pp. 537-556, 2003.

\bibitem{robust_con}
C. Abdallah, D. M. Dawson, P. Dorato, M. Jamshidi. Survey of robust control for rigid robots. \emph{ IEEE Control Systems Magazine}, 11(2), pp. 24-30, 1991.

\bibitem{thesis}
J. Sun de la Cruz. \emph{Learning Inverse Dynamics for Robot Manipulator Control}. M.A.Sc. Thesis, University of Waterloo, 2011.

\bibitem{peters1}
R. Calandra, S. Ivaldi, M. P. Deisenroth, E. Rueckert, J. Peters. Learning inverse dynamics models with contacts. \emph{IEEE International Conf. on Robotics and Automation}, Seattle, 2015, pp. 3186-3191. 

\bibitem{invlearn1}
A. S. Polydoros, L. Nalpantidis, V. Kruger. Real-time deep learning
of robotic manipulator inverse dynamics. \emph{IEEE/RSJ International
Conf. on Intelligent Robots and Systems}, Hamburg, 2015, pp. 3442-3448. 

\bibitem{invlearn2}
S. Zhou, M. K. Helwa, A. P. Schoellig. Design of deep neural networks
as add-on blocks for improving impromptu trajectory tracking.
\emph{IEEE Conf. on Decision and Control}, Melbourne, 2017, pp. 5201-5207.

\bibitem{invlearn_model1}
D. Nguyen-Tuong, J. Peters. Using model knowledge for learning inverse dynamics. \emph{IEEE International Conf. on Robotics and Automation}, Anchorage, 2010, pp. 2677-2682.

\bibitem{general_safe_learning}
J. F. Fisac, A. K. Akametalu, M. N. Zeilinger, S. Kaynama, J. Gillula, C. J. Tomlin. A general safety framework for learning-based control in uncertain robotic systems. \emph{arXiv preprint arXiv:1705.01292}, 2017.

\bibitem{LyapRL1}
T. J. Perkins, A. G. Barto. Lyapunov design for safe reinforcement learning. \emph{The Journal of Machine
Learning Research}, 3, pp. 803-832, 2003.

\bibitem{LyapRL2}
J. W. Roberts, I. R. Manchester, R. Tedrake. Feedback
controller parameterizations for reinforcement
learning. \emph{IEEE Symposium on Adaptive Dynamic Programming
and Reinforcement Learning}, Paris, 2011, pp. 310-317.

\bibitem{risk_rl}
P. Geibel, F. Wysotzki. Risk-sensitive reinforcement learning applied to control under constraints.
\emph{Journal of Artificial Intelligence Research}, 24, pp. 81-108, 2005.

\bibitem{safe_exp1}
M. Turchetta, F. Berkenkamp, A. Krause. Safe
exploration in finite markov decision processes with
Gaussian processes. \emph{Advances in Neural Information
Processing Systems}, Barcelona, 2016, pp. 4312-4320.

\bibitem{safe_exp2}
F. Berkenkamp, R. Moriconi, A. P. Schoellig, A. Krause. Safe learning of regions of attraction for uncertain, nonlinear systems with Gaussian processes. \emph{IEEE Conf. on Decision and Control}, Las Vegas, 2016, pp. 4661-4666.

\bibitem{GP_felix}
F. Berkenkamp, A. P. Schoellig, A. Krause. Safe controller optimization for quadrotors with Gaussian processes. \emph{IEEE International Conf. on Robotics and Automation}, Stockholm, 2016, pp. 491-496. 

\bibitem{adaptive1}
G. Chowdhary, H. A. Kingravi, J. P. How, P. A. Vela. Bayesian
nonparametric adaptive control using gaussian processes.
\emph{IEEE trans. on neural networks and learning systems}, 26(3), pp. 537-550, 2015. 

\bibitem{Schaal}
F. Meier, D. Kappler, N. Ratliff, S. Schaal. Towards robust online inverse dynamics learning. \emph{IEEE/RSJ International Conf. on Intelligent Robots and Systems}, Daejeon, 2016, pp. 4034-4039. 

\bibitem{Kulic}
T. Beckers, J. Umlauft, D. Kuli\'{c}, S. Hirche. Stable Gaussian process based tracking control of Lagrangian systems. \emph{IEEE Conf. on Decision and Control}, Melbourne, 2017, pp. 5580-5585.

\bibitem{HS17}  
M. K. Helwa, A. P. Schoellig. On the construction of safe controllable regions for affine systems with applications to robotics. \emph{arXiv preprint arXiv:1610.01243}, 2016.

\bibitem{Murray}
R. M. Murray. Nonlinear control of mechanical systems: a Lagrangian perspective. \emph{Annual Reviews in Control}, 21, pp. 31-42, 1997.

\bibitem{GP_conf}
N. Srinivas, A. Krause, S. M. Kakade, M. W. Seeger. Information-theoretic regret bounds for Gaussian
process optimization in the bandit setting. \emph{IEEE Trans. on Information Theory}, 58(5), pp. 3250-3265, 2012.

%\bibitem{ROS}
%universal\textunderscore robot package, ver. 1.1.9, link: wiki.ros.org/universal\textunderscore robot, 2014.
 



%
%\bibitem{DNN16}
%Q. Li, J. Qian, Z. Zhu, X. Bao, M. K. Helwa, A. P. Schoellig. Deep neural networks for improved, impromptu trajectory tracking of quadrotors. \emph{IEEE Intl. Conf. on Robotics and Automation}, accepted, 2017, available on arXiv, arXiv:1610.06283 [cs.RO]. 
%
%\bibitem{Ostafew16}
%C. J. Ostafew, A. P. Schoellig, T. D. Barfoot.
%Robust constrained learning-based NMPC enabling reliable mobile robot path tracking.
%\emph{The International Journal of Robotics Research}, vol. 35(13), pp. 1547-1563, 2016. 
%
%\bibitem{Felix15}
%F. Berkenkamp, A. P. Schoellig. Safe and robust
%learning control with Gaussian processes. 
%\emph{European Control Conference}, 2015, pp. 2501-2506.
%
%\bibitem{abbeel}
%S. Levine, N. Wagener, P. Abbeel.
%Learning contact-rich manipulation skills with guided policy search. \emph{IEEE Intl. Conf. on Robotics and Automation}, 2015, pp. 156-163.
%
%\bibitem{ICRA17}
%A. Marco, {\it et al.} Virtual vs. real: Trading off simulations and physical experiments
%in reinforcement learning with Bayesian optimization. \emph{IEEE Intl. Conf. on Robotics and Automation}, accepted, 2017. 
%
%\bibitem{abbeel2}
%C. Devin, A. Gupta, T. Darrell, P. Abbeel, S. Levine.
%Learning modular neural network policies for multi-task and multi-robot transfer. \emph{Available at arXiv,  arXiv:1609.07088v1 [cs.LG]}.
%
%\bibitem{abbeel3}
%A. Gupta, C. Devin, Y. Liu, P. Abbeel, S. Levine.
%Learning Invariant Feature Spaces to Transfer Skills with Reinforcement Learning.
%Intl. Conf. on Learning Representations, 2017.
%
%\bibitem{Kaizad}
%K. V. Raimalwala, B. A. Francis, A. P. Schoellig.
%An upper bound on the error of alignment-based transfer learning between two linear, time-invariant, scalar systems.
%\emph{IEEE/RSJ Intl. Conf. on Intelligent Robots and Systems}, 2015, pp. 5253-5258.  
%
%\bibitem{Janssens2012}
%P. Janssens, G. Pipeleers, J. Swevers. Initialization of
%ILC based on a previously learned trajectory.
%\emph{American Control Conference}, 2012, pp. 610-614.
%
%\bibitem{Hamer2013}
%M. Hamer, M. Waibel, R. D'Andrea. Knowledge
%transfer for high-performance quadrocopter maneuvers. \emph{IEEE/RSJ Intl. Conf. on Intelligent
%Robots and Systems}, 2013, pp. 1714-1719.
%
%\bibitem{Um2014}
%T. T. Um, M. S. Park, J.-M. Park. Independent
%joint learning: A novel task-to-task transfer learning scheme for
%robot models. \emph{IEEE Intl. Conf. on
%Robotics and Automation}, 2014, pp. 5679-5684.
%
%\bibitem{Schoellig12}
%A. P. Schoellig, J. Alonso-Mora, R. D'Andrea.
%Limited benefit of joint estimation in multi-agent iterative
%learning. \emph{Asian Journal of Control}, vol. 14(3), pp. 613-623, 2012.
%
%\bibitem{Balaraman2010}
%B. Lakshmanan, R. Balaraman. Transfer learning across
%heterogeneous robots with action sequence mapping. \emph{IEEE/RSJ Intl. Conf. on Intelligent Robots
%and Systems}, 2010, pp. 3251-3256.
%
%\bibitem{Boutsioukis2012}
%G. Boutsioukis, I. Partalas, I. Vlahavas. Transfer
%learning in multi-agent reinforcement learning domains. \emph{
%In Recent Advances in Reinforcement Learning}, pp. 249-260, Springer-Verlag Berlin, 2012.
%
%\bibitem{Bocsi13}
%B. B\'{o}csi, L. Csat\'{o}, J. Peters. Alignment-based transfer
%learning for robot models. \emph{Intl.
%Joint Conf. on Neural Networks}, 2013, pp. 1-7.
%
%\bibitem{Manifold2}
%N. Makondo, B. Rosman, O. Hasegawa. Knowledge transfer for learning robot models via local procrustes
%analysis. \emph{IEEE-RAS 15th Intl. Conf. on Humanoid Robots}, 2015, pp. 1075-1082.   
%
%\bibitem{Tuyls12}
%K. Tuyls, G. Weiss. Multiagent learning: Basics, challenges, and prospects. \emph{AI Magazine}, vol. 33(3), pp. 41-52, 2012.
%
%\bibitem{Wang08}
%C. Wang, S. Mahadevan. Manifold alignment using
%procrustes analysis. \emph{Intl.
%Conf. on Machine Learning}, 2008, pp. 1120-1127.
%
%\bibitem{Wang09}
%C. Wang, S. Mahadevan. A general framework for
%manifold alignment. \emph{AAAI Fall Symposium on Manifold
%Learning and Its Applications}, 2009, pp. 53-58.
%
%\bibitem{chnon}
%F. J. Doyle, M. A. Henson. Nonlinear Systems Theory. M.A. Henson, D.E. Seborg (Eds), \emph{Nonlinear
%Process Control}, Prentice Hall, 1997.
%
%\bibitem{Isidori}
%A. Isidori. \emph{Nonlinear Control Systems}, 3rd edition. Springer-Verlag London Limited, 1995.
%
%\bibitem{Sontag}
%E. Sontag, Y. Wang. A Notion of Input to Output Stability. \emph{European Control Conference}, 1997, pp. 3862-3867.
%
%\bibitem{rob_book}
%R. N. Jazar. \emph{Theory of Applied Robotics: Kinematics, Dynamics, and Control}. Springer, 2007. 
%
%\bibitem{angela_quad}
%A. P. Schoellig, M. Hehn, S. Lupashin, R. D'Andrea. Feasibility of motion primitives for choreographed quadrocopter flight. \emph{American Control Conference}, 2011, pp. 3843-3849. 
%
%\bibitem{HS16}
%M. K. Helwa, A. P. Schoellig. On the construction of safe controllable regions for affine systems with applications to robotics. \emph{Available at ArXiv, arXiv:1610.01243 [cs.SY]}. 
%
%%\bibitem{predictive1}
%%A. Aswani, H. Gonzalez, S. S. Sastry, C. Tomlin.
%%Provably safe and robust learning-based model predictive control. \emph{Automatica}. Vol. 49, pp. 1216-1226, 2013.
%
%%%\bibitem{Azuma}
%%%S. Azuma, J. Imura, T. Sugie. Lebesgue piecewise affine approximation of nonlinear systems. \emph{Nonlinear Analysis of Hybrid Systems}. Vol. 4(1), pp. 92-102, Feb 2010. 
%%
%%%\bibitem{BEMPORAD}
%%%A. Bemporad and M. Morari.
%%%Control of systems integrating logic, dynamics, and constraints.
%%%{\em Automatica}.
%%%vol. 35, pp. 407--428, March 1999.
%%
%%%\bibitem{simplicial2}
%%%A. Barvinok.
%%%Thrifty Approximations of Convex Bodies by Polytopes.
%%%{\em Int. Math. Research Notices}. Published online, April 2013.
%%
%%%\bibitem{BEMPORAD2}
%%%A. Bemporad, A. Oliveri, T. Poggi, and M. Storace. 
%%%Ultra-fast stabilizing model predictive control via canonical piecewise affine approximations. 
%%%{\em IEEE Transactions on Automatic Control}. 
%%%Vol. 56, issue 12, pp. 2883-2897, Dec. 2011.
%%
%%%\bibitem{BemMor2000}
%%%A. Bemporad, G. Ferrari-Trecate, and M. Morari. Observability and controllability of piecewise affine and hybrid systems. \emph{IEEE Trans. Automatic Control}, vol. 45, no. 10, pp. 1864-1876, 2000.
%%
%%%\bibitem{BEMPORAD2}
%%%A. Bemporad, G. Ferrari-Trecate, and M. Morari. 
%%%Observability and controllability of piecewise affine and hybrid systems. 
%%%\emph{Proc. 38th IEEE Conf. Decision and Control}, New York, 1999, pp. 3966-3971.
%%
%%\bibitem{blanchini}
%%F. Blanchini.
%%Set invariance in control.
%%{\em Automatica}. Vol. 35, pp. 1747-1767, 1999.
%%
%%%\bibitem{Brondsted2}
%%%A. Brondsted.
%%%{\em An Introduction to Convex Polytopes}.
%%%Springer-Verlag, New York Inc., 1983.
%%
%%\bibitem{MEB10}
%%M. E. Broucke.
%%Reach control on simplices by continuous state feedback.
%%{\em SIAM Journal on Control and Optimization}.
%%Vol. 48(5), pp. 3482-3500, 2010.
%%
%%\bibitem{Caines95}
%%P. E. Caines, Y. J. Wei.
%%The hierarchical lattices of a finite machine.
%%\emph{Systems and Control Letters}. Vol. 25(4), pp. 257-263, 1995.
%%
%%\bibitem{Caines98}
%%P. E. Caines, Y. J. Wei.
%%Hierarchical hybrid control systems: a lattice theoretic formulation.
%%\emph{IEEE Transactions on Automatic Control}. Vol. 43(4), pp. 501-508, 1998.
%%
%%%\bibitem{CAMLIBEL}
%%%M.K. Camlibel, W.P.M.H. Heemels, J.M. Schumacher. 
%%%Algebraic necessary and sufficient conditions for the controllability of conewise linear systems. 
%%%{\em IEEE Trans. on Aut. Control}, 53(3), pp. 762--774, 2008.
%%
%%%\bibitem{Clarke}
%%%F. H. Clarke, Yu. S. Ledyaev, R. J. Stern, P. R. Wolenski. 
%%%\emph{Nonsmooth Analysis and Control Theory}. 
%%%Springer-Verlag New York Inc., 1998.
%%
%%
%%%\bibitem{Coron}
%%%J. M. Coron, S. Guerrero. Null controllability of the N-dimensional stokes system with N-1 scalar controls. \emph{Journal of Differential Equations}. Vol. 246(7), pp. 2908-2921, Apr 2009.
%%
%%%\bibitem{Danskin}
%%%J. M. Danskin. 
%%%The theory of max--min, with applications. 
%%%\emph{SIAM Journal on Applied Mathematics}. 
%%%Vol. 14(4), pp. 641-664, 1966.
%%
%%%\bibitem{Hennet}
%%%C.E.T. Dorea, J.-C. Hennet. (A,B)-invariance conditions of polyhedral domains
%%%for continuous-time systems. \emph{European Journal of Control}. Vol. 5, pp. 70-81, 1999.
%%
%%\bibitem{Filippov}
%%A. F. Filippov.
%%\emph{Differential Equations with Discontinuous Right Hand Sides.}
%%Kluwer Academic Publishers, 1988. 
%%
%%%\bibitem{HEEMELS}
%%%B.A.G. Genuit, L. Lu, W.P.M.H. Heemels. 
%%%Approximation of PWA control laws using regular partitions: an ISS approach. 
%%%{\em 18th IFAC World Congress}. 
%%%Milano, Italy, August 2011, pp. 4540-4545.
%%
%%%\bibitem{Markus}
%%%M . Ganness. \emph{Reach Control on Simplices by Piecewise Affine Feedback}. M.A.Sc. thesis, University of
%%%Toronto, 2010.
%%
%%%\bibitem{Gamelin}
%%%T. W. Gamelin, R. E. Greene. \emph{Introduction to Topology}, 2nd edition, Dover Publications, Inc., 1999.
%%
%%%\bibitem{TEEL}
%%%R. Goebel, R. G. Sanfelice, and A. R. Teel.  
%%%Hybrid dynamical systems. 
%%%\emph{IEEE Control Systems Magazine}. 
%%%Vol. 29, no. 2, pp. 28--93, April 2009. 
%%
%%\bibitem{HVS04}
%%L. C. G. J. M. Habets, J. H. van Schuppen.
%%A control problem for affine dynamical systems on a full-dimensional polytope.
%%{\em Automatica}. Vol. 40(1), pp. 21-35, 2004.
%%
%%%\bibitem{HVS06}
%%%L. C. G. J. M. Habets, P. J. Collins, and J. H. van Schuppen.
%%%Reachability and control synthesis for piecewise-affine hybrid systems on simplices.
%%%{\em IEEE Trans. Automatic Control}. Vol. 51, pp. 938--948, 2006.
%%
%%%\bibitem{Helwa_thesis}
%%%Mohamed Helwa. \emph{Reach Control Problems on Polytopes}. PhD thesis, University of Toronto, 2013.
%%
%%\bibitem{Heemels2}
%%W. P. M. H. Heemels, M. K. Camlibel. Controllability of linear systems with input and state constraints. \emph{IEEE Conference on Decision and Control}, New Orleans, 2007, pp. 536-541.
%%
%%\bibitem{H15}
%%M. K. Helwa.
%%In-block controllability of controlled switched linear systems on polytopes.
%%\emph{IFAC-Papers OnLine, Elsevier}. Vol. 48 (27), pp. 7-12, 2015. 
%%
%%
%%
%%%\bibitem{HB11}
%%%M. K. Helwa and M. E. Broucke.
%%%Monotonic Reach Control on Polytopes.
%%%{IEEE Conf. Decision and Control and European Control Conf}. 
%%%Orlando, Dec. 2011, pp. 4741-4746.
%%%
%%%\bibitem{HB12}
%%%M. K. Helwa, M. E. Broucke. 
%%%Generalized flow conditions for reach control on polytopes.
%%%{\em IEEE Conference on Decision and Control}, 
%%%Hawaii, 2012, pp. 4199-4204.  
%%
%%
%%
%%\bibitem{HB13}
%%M. K. Helwa, M. E. Broucke.
%%Monotonic reach control on polytopes.
%%{\em IEEE Transactions on Automatic Control}. 
%%Vol. 58(10), pp. 2704-2709, 2013.
%%
%%\bibitem{HB14}
%%M. K. Helwa, M. E. Broucke.
%%Flow functions, control flow functions, and the reach control problem.
%%\emph{Automatica}. Vol. 55, pp. 108-115, 2015. 
%%
%%\bibitem{HC14}
%%M. K. Helwa, P. E. Caines. In-block controllability of affine systems on polytopes. \emph{IEEE Conference on Decision and Control,} Los Angeles, 2014, pp. 3936-3942.
%%
%%\bibitem{HC14_3}
%%M. K. Helwa, P. E. Caines.
%%Relaxed in-block controllability of affine systems on polytopes.
%%{\em IEEE Conference on Decision and Control}, Los Angeles, 2014, pp. 3943-3949. 
%%
%%\bibitem{HC14_2}
%%M. K. Helwa, P. E. Caines. Hierarchical control of piecewise affine hybrid systems. \emph{IEEE Conference on Decision and Control,} Los Angeles, 2014, pp. 3950-3956.
%%
%%\bibitem{HC15}
%%M. K. Helwa, P. E. Caines. Epsilon controllability of nonlinear systems on polytopes. \emph{IEEE Conference on Decision and Control,} Osaka, Japan, 2015, pp. 252-257.
%%
%%
%%\bibitem{HC15_2}
%%M. K. Helwa, P. E. Caines. On the construction of in-block controllable covers of nonlinear systems on polytopes. \emph{IEEE Conference on Decision and Control,} Osaka, Japan, 2015, pp. 276-281.
%%
%%\bibitem{HC16}
%%M. K. Helwa, P. E. Caines. In-block controllability of affine systems on polytopes. \emph{IEEE Transactions on Automatic Control}. Accepted, 2016.
%%
%%\bibitem{BMI}
%%D. Henrion, J. L{\"o}fberg, M. Kocvara, M. Stingl.
%%Solving polynomial static output feedback problems with PENBMI.
%%{\em IEEE Conference on Decision and Control, and the European Control Conference,} Seville, 2005, pp. 7581-7586.
%%
%%\bibitem{Caines2002_2}
%%P. Hubbard, P. E. Caines.
%%Dynamical consistency in hierarchical supervisory control. \emph{IEEE Transactions on Automatic Control}. Vol. 47(1), pp. 37-52, 2002.
%%
%%\bibitem{Isidori}
%%A. Isidori. \emph{Nonlinear Control Systems}.
%%Springer-Verlag, 1995.
%%
%%%\bibitem{Hermann}  
%%%R. Hermann, A. J. Krener. Nonlinear Controllability and Observability. \emph{IEEE Transactions on Automatic Control}. Vol. 22(5), pp. 728-740, Oct. 1977.
%%
%%%\bibitem{HC14}
%%%M.K. Helwa, P.E. Caines.
%%%Hierarchical Control of Piecewise Affine Hybrid Systems.
%%%{\em The 53rd IEEE Conference on Decision and Control}, Los Angeles, Dec. 2014.    
%%
%%%\bibitem{Caines2002}
%%%P. Hubbard and  P. E. Caines.
%%%Dynamical Consistency in Hierarchical Supervisory Control. \emph{IEEE Trans. on Automatic Control}, Vol. 47, No. 1, pp 37 - 52, January 2002.
%%
%%%\bibitem{Hopcroft}
%%%J. Hopcroft, J. Ullman.
%%%\emph{Introduction to Automata Theory, Languages, and Computation}.
%%%Addison-Wesley, 1979.
%%
%%\bibitem{Zucker}
%%K. Kant, S. W. Zucker. Toward efficient trajectory
%%planning: the path-velocity decomposition. \emph{The International Journal of Robotics Research}. Vol. 5(3), pp. 72-89, 1986.
%%
%%
%%%
%%%\bibitem{Hunt}
%%%L. R. Hunt. n-dimensional controllability with (n-1) controls. \emph{IEEE Transactions on Automatic Control}. Vol. 27, pp. 113-117, 1982.
%%
%%\bibitem{Isidori}
%%A. Isidori. \emph{Nonlinear Control Systems}, 3rd edition. Springer-Verlag London Limited, 1995.
%%
%%%\bibitem{Julian}
%%%P. Julian, A. Desages, O. Agamennani. High-level canonical piecewise linear representation using a simplicial partition. \emph{IEEE on Circuits and Systems}. Vol. 46(4), pp. 463-480, 1999.
%%
%%%\bibitem{Khalil}
%%%H. K. Khalil. \emph{Nonlinear Systems}, 2nd edition, Prentice Hall, 1996. 
%%
%%%\bibitem{Kelley}
%%%J. L. Kelley. \emph{General Topology}. David van Nostrand Company, Inc., 1955, reprinted by Springer-Verlag, 1975.
%%
%%%\bibitem{LEE}
%%%C. W. Lee.
%%%Subdivisions and triangulations of polytopes.
%%%{\em Handbook of Discrete and Computational Geometry}.
%%%CRC Press Series Discrete Math. Appl., pp. 271-290, 1997.
%%
%%%\bibitem{Lee2}
%%%K. K. Lee, A. Arapostathis. On the controllability of piecewise-linear hypersurface systems. \emph{Systems and Control Letters}. Vol. 9(1), pp. 89-96, Jun 1987.
%%
%%%\bibitem{LB11}
%%%Z. Lin, M. E. Broucke.
%%%On a reachability problem for affine hypersurface systems on polytopes.
%%%{\em Automatica}. Vol. 47(4), pp. 769-775, Apr 2011.
%%
%%%\bibitem{simplicial1}
%%%J. P. Moreno and R. Schneider.
%%%Intersection properties of polyhedral norms.
%%%{\em Advances in Geometry}. Vol. 7 (3), pp. 391-�402, June 2007.
%%
%%%\bibitem{Prajna1}
%%%S. Prajna and A. Jadbabaie. 
%%%Safety verification of hybrid systems using barrier certificates. 
%%%\emph{The 7th International Conf. Hybrid Systems: Computation and Control}, 
%%%pp. 477--492, 2004.
%%
%%%\bibitem{Prajna2}
%%%S. Prajna, A. Rantzer. 
%%%Convex programs for temporal verification of nonlinear dynamical systems. 
%%%{\em SIAM Journal on Control and Optimization}. 
%%%Vol. 46, no. 3, pp. 999--1021, 2007.
%%
%%\bibitem{Angela2014} 
%%C. J. Ostafew, A. P. Schoellig, T. D. Barfoot, J. Collier. Speed daemon: experience-based mobile robot speed scheduling. \emph{The Canadian Conference on Computer and Robot Vision}, Montreal, 2014, pp. 56-62.
%%
%%\bibitem{Ezquerro}
%%M. Prado, A. Sim\'{o}n, E. Carabias, A. Perez, F. Ezquerro. Optimal velocity planning of wheeled
%%mobile robots on specific paths in static and dynamic
%%environments. \emph{Journal of Robotic Systems}. Vol. 20(12), pp. 737-754, 2003.
%%
%%\bibitem{Andrea}
%%O. Purwin, R. D'Andrea. Trajectory generation and control for four wheeled omnidirectional vehicles.
%%\emph{Robotics and Autonomous Systems}. Vol. 54(1), pp. 13-22, 2006.
%%
%%\bibitem{predictive2}
%%J. B. Rawlings, D. Q. Mayne. \emph{Model Predictive Control: Theory and Design}. Nob Hill Publishing, LLC, 2009.
%%%
%%%\bibitem{ROCK}
%%%R.T. Rockafellar.
%%%{\it Convex Analysis}.
%%%Princeton University Press,
%%%Princeton, New Jersey, 1970.
%%
%%\bibitem{NP}
%%O. Toker, H. \"{O}zbay.
%%On the NP-hardness of solving bilinear matrix inequalities and simultaneous
%%stabilization with static output feedback.
%%{\em The American Control Conference}, Seattle, 1995, pp. 2525-2526.
%%
%%%\bibitem{BOYD}
%%%L. Rodrigues and S. Boyd.
%%%Piecewise-affine state feedback for piecewise-affine slab systems using convex optimization.
%%%{\em Systems and Control Letters}. 
%%%54(9), pp. 835--853, September 2005.
%%
%%%\bibitem{RB06}
%%%B. Roszak and M. E. Broucke.
%%%Necessary and sufficient conditions for reachability on a simplex.
%%%{\em Automatica}, 42(11), pp. 1913--1918, 2006.
%% 
%%%\bibitem{ROUCHE}
%%%N. Rouche, P. Habets, M. Laloy. 
%%%\emph{Stability Theory by Liapunov's Direct Method}.
%%%Springer-Verlag, New York Inc., 1977.  
%%
%%%\bibitem{Elham11}
%%%E. Semsar Kazerooni, M. E. Broucke. Reach controllability of single input affine systems. \emph{IEEE conference on Decision and Control (CDC) and European Control Conf.}, Orlando, Dec. 2011, pp. 4747-4752.
%%
%%%\bibitem{Caines2002}
%%%G. Shen, P. E. Caines.
%%%Hierarchically Accelerated Dynamic Programming for Finite State Machines.
%%%\emph{IEEE Transactions on Automatic Control}. Vol. 47(2), pp. 271-283, Feb 2002.
%%
%%%\bibitem{Sontag_PWL}
%%%E. D. Sontag. Nonlinear regulation: the piecewise linear approach. \emph{IEEE Transactions on Automatic Control}. Vol. AC-26(2), pp. 346-358, Apr 1981.
%%
%%%\bibitem{Sontag1}
%%%E. D. Sontag. Controllability is harder to decide than accessibility. \emph{Siam Journal on Control and Optimization}. Vol. 26, pp. 1106-1118, 1988.
%%
%%%\bibitem{Jurdjevic} 
%%%H. Sussmann, V. Jurdjevic. Controllability of Nonlinear Systems. \emph{Journal of Differential Equations}. Vol. 12, pp. 95-116, 1972.
%%
%%%\bibitem{YOSHIZAWA}
%%%T. Yoshizawa.
%%%{\em Stability Theory by Lyapunov's Second Method}.
%%%Mathematical Society of Japan, 1966.
%%
%%%\bibitem{Zavieh}
%%%A. Zavieh, L. Rodrigues. Intersection-based piecewise affine approximation of nonlinear systems.
%%%\emph{The 21st IEEE Mediterranean Conference on Control and Automation}, Chania, Greece,  Jun 2013, pp. 640-645.
%%
%%%\bibitem{Zhirabok}
%%%A. Zhirabok, A. Shumsky. An approach to the analysis of observability and controllability in nonlinear systems via linear methods. \emph{International Journal of Applied Mathematics and Computer Science}. Vol. 22(3), pp. 507-522, Sep 2012.



\end{thebibliography}

\end{document}